\def\eqref#1{equation~\ref{#1}}
\def\1{\bm{1}}
\def\vd{{\bm{d}}}
\def\ve{{\bm{e}}}
\def\vh{{\bm{h}}}
\def\vp{{\bm{p}}}
\def\vw{{\bm{w}}}
\def\vx{{\bm{x}}}
\def\mA{{\bm{A}}}
\def\mH{{\bm{H}}}
\def\mM{{\bm{M}}}
\def\mW{{\bm{W}}}
\DeclareMathAlphabet{\mathsfit}{\encodingdefault}{\sfdefault}{m}{sl}
\SetMathAlphabet{\mathsfit}{bold}{\encodingdefault}{\sfdefault}{bx}{n}
\newcommand{\E}{\mathbb{E}}
\newcommand{\Cov}{\mathrm{Cov}}
\theoremstyle{thmstyleone}%
\newtheorem{theorem}{Theorem}
\newtheorem*{theorem*}{Theorem}
\newtheorem{proposition}[theorem]{Proposition}%
\newtheorem*{proposition*}{Proposition}
\newtheorem{lemma}[theorem]{Lemma}%
\setlist[itemize]{leftmargin=*}
\setlist[enumerate]{leftmargin=*}
\newcommand{\nop}[1]{}
\newcommand{\Le}{ \mathcal{L}_{\text{equiv}} }
\newcommand{\Lm}{ \mathcal{L}_{\text{mean}} }
\newcommand{\te}{\theta_{\mathcal{E}}}
\newcommand{\tep}{\theta_{\mathcal{E}\perp}}
\let\svthefootnote\thefootnote
\newcommand\blfootnote[1]{%
  \let\thefootnote\relax%
  \footnotetext{#1}%
  \let\thefootnote\svthefootnote%
}
\title{Training Dynamics of Learning 3D-Rotational Equivariance}
\author{Max W. Shen\thanks{Co-first author.} \email shenm19@gene.com \\ \addr Genentech Computational Sciences
\AND
Ewa M. Nowara\footnotemark[1] \\ \addr Genentech Computational Sciences
\AND
Michael Maser \\ \addr Genentech Computational Sciences
\AND
Kyunghyun Cho \\ \addr Genentech Computational Sciences \& New York University
}
\begin{document}

\maketitle

\nop{
    \section{Abstract}
}
\begin{abstract}
While data augmentation is widely used to train symmetry-agnostic models, it remains unclear how quickly and effectively they learn to respect symmetries.
We investigate this by deriving a principled measure of equivariance error that, for convex losses, calculates the percent of total loss attributable to imperfections in learned symmetry.
We focus our empirical investigation to 3D-rotation equivariance on high-dimensional molecular tasks (flow matching, force field prediction, denoising voxels) and find that models reduce equivariance error quickly to $\leq$2\% held-out loss within 1k-10k training steps, a result robust to model and dataset size. 
This happens because learning 3D-rotational equivariance is an easier learning task, with a smoother and better-conditioned loss landscape, than the main prediction task.
For 3D rotations, the loss penalty for non-equivariant models is small throughout training, so they may achieve lower test loss than equivariant models per GPU-hour unless the equivariant ``efficiency gap'' is narrowed.
We also experimentally and theoretically investigate the relationships between relative equivariance error, learning gradients, and model parameters.



\end{abstract}

\section{Introduction}

Machine learning modeling of molecules -- generative modeling, property prediction, simulating dynamics, etc. -- holds great potential for advancing scientific discovery and human health via therapeutics.
Molecules are three-dimensional physical entities whose biochemical properties are invariant or equivariant to 3D rotations\footnote{Molecules also have symmetries to translation, which are commonly handled by centering molecule positions.}.
To model these symmetries, two approaches are common: 1) use symmetry-respecting neural architectures, or 2) training symmetry-agnostic models with data augmentation, wherein training samples are randomly transformed by the symmetry group.
This choice is made at the start of any molecular modeling project and can have a significant impact on engineering, training, and model performance, yet there has been a lack of clarity on when to prefer which approach.



3D-rotational equivariant architectures use sophisticated tensor operations to maintain equivariance~\citep{luo2024enabling}, achieve loss scaling curves similar to non-equivariant models~\citep{brehmer2025does_equivariance_matter_at_scale,mahan2024what}, and are more parameter efficient than non-equivariant models on spherical image tasks~\citep{pmlr-v162-gerken22a-spherical}.
Yet they can be much slower (10x-100x) than non-equivariant models\footnote{Slowness is also partially from less optimized code and GPU kernels.}
~\citep{pmlr-v162-gerken22a-spherical,elhag2025relaxedequivariancemultitasklearning,brehmer2025does_equivariance_matter_at_scale}, and they can be harder to optimize based on findings that breaking exact equivariance improves learning~\citep{qu2024escaip,pertigkiozoglou2024improving,equivdyna_2024}.
We call this the \textit{efficiency gap}, arising both from optimization speed (training steps per second) and ease (loss reduction per training step).
Meanwhile, recent work achieve strong performance on molecular machine learning tasks using non-equivariant architectures with data augmentation~\citep{wang2024swallowing,Abramson2024-wz-alphafold3,qu2024escaip,geffner2025proteina}.

\begin{figure}[ht]
    \centering
    \includegraphics[width=1\linewidth]{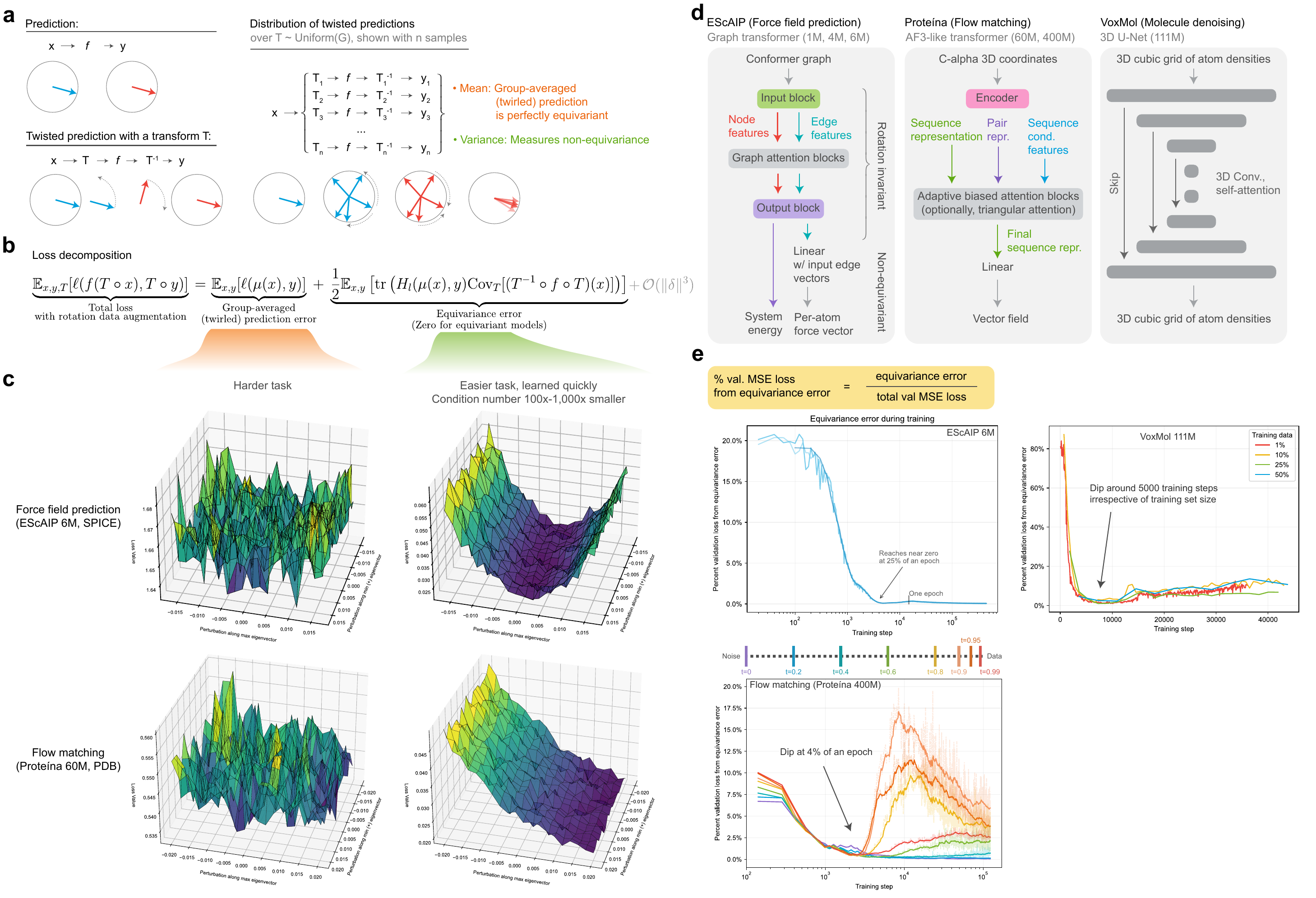}
    \caption{Overview of the paper.
    (a) Schematic of twisting and twirling, which underpin a principled measure of equivariance error.
    (b) Loss decomposition by Taylor expansion around the twirled prediction.
    (c) Loss landscapes for each loss component at early model checkpoints (step=500).
    (d) Architectures of three non-equivariant models studied here.
    (e) For MSE loss, the loss decomposition holds exactly, enabling computing the percent validation loss from equivariance error, which is plotted by training step in three settings.
    }
    \label{fig:1}
\end{figure}

To answer ``are symmetry-respecting architectures worth it?'', one powerful principle is: \textit{use the model that achieves better held-out loss}.
In a fixed amount of GPU-hours, non-equivariant models could incur ``unnecessary'' equivariance error leading to higher loss, but equivariant models may achieve worse test loss due to the efficiency gap.
In fact, we suggest the loss penalty vs. efficiency gap tradeoff is a general explanatory framework.
This work focuses on equivariance, because on rotation-invariant tasks like property prediction, symmetry-respecting architectures are relatively uncontroversial~\citep{shoghi2024from-jmp,gasteiger2022gemnetoc, nowara2025needequivariantmodelsmolecule}: they have a minimal efficiency gap to symmetry-agnostic architectures as rotation-invariant features are informative and fast to compute, and standard deep learning operations easily preserve rotation invariance.
In contrast, consider set permutation invariance where the symmetry-respecting architecture is the norm. This can be explained by observing that set transformers have minimal efficiency gap to symmetry-agnostic transformers, as set transformers simply ignore positional embeddings.

While it is possible to directly compare efficiency gaps to loss penalties from imperfect symmetry, this is easily confounded by implementation details. To provide a more fundamental insight, we instead isolate and quantify a key source of potential underperformance in symmetry-agnostic models.
We develop tools to investigate: \textit{what is the percent of a symmetry-agnostic model's loss that comes only from its failure to be perfectly equivariant?} (\S\ref{section:loss_decomp_measure_equiv}, Fig. \ref{fig:1}A-B)
In an idealized setting (ignoring efficiency differences), this characterizes the counterfactual error reduction if we had trained a symmetry-respecting model instead.
In light of efficiency gaps for 3D-rotational equivariance, this metric quantifies how small the efficiency gap must become for equivariant models to outperform non-equivariant models.

In this work, we focus our empirical investigations to three high-dimensional ($\mathbb{R}^{3N} \rightarrow \mathbb{R}^{3N}$) molecular learning tasks satisfying 3D-rotational equivariance -- flow matching, molecular dynamics force field prediction, and denoising voxelized atomic densities (\S\ref{section:experiments}, Fig. \ref{fig:1}D).
For any non-negative convex loss, we decompose the total loss with data augmentation $\mathcal{L}(\theta) = \Lm(\theta) + \Le(\theta)$, where $\Le$ captures all information about deviance from exact equivariance.
In particular, exactly equivariant models have $\mathcal{L}(\theta) = \Lm(\theta)$, i.e., $\Le = 0$.

We find that equivariance error shrinks rapidly within 1k-10k training steps (minutes) to under 2\% of the total loss (Fig. \ref{fig:1}E).
This occurs because $\Le$ is a significantly easier learning task than $\Lm$: the loss landscape for $\Le$ is significantly smoother and better conditioned (Fig. \ref{fig:1}C).
Strikingly, this is robust to model size, training set size, batch size, and optimizer: we find it with standard batch sizes as well as batch size 1, on training sets of 1M molecules to as small as 500 molecules, and on model sizes from 1M to 400M. 

Lastly, in \S\ref{section:theory}, we conduct theoretical and experimental investigations to better understand the relationship between relative equivariance error, learning gradients, and parameters. We prove that under certain conditions, and experimentally, smaller equivariance error can increase the similarity between $\mathcal{L}(\theta) \approx \Lm(\theta)$. We also prove a quadratic relationship between $\Le$ and the parameter deviation from the subspace of exactly equivariant functions for the modern graph transformer EScAIP.

\section{Measuring Equivariance \& Loss Decompositions}\label{section:loss_decomp_measure_equiv}


Let $f: \mathbb{R}^D \rightarrow \mathbb{R}^D$ be a learnable function and let $G$ be a compact group, for instance of 3D rotations. 
We consider $T$ as the matrix representation of the action of $G$ on $\mathbb{R}^D$.
A function $f$ is $G$-equivariant if it commutes with all transformations $T \in G$, such that for any input $x \in \mathbb{R}^D$, we have $f(T(x)) = T(f(x))$, also written $(f \circ T)(x) = (T \circ f)(x)$.
Rearranging, we observe that a perfectly equivariant function satisfies, for all $x, T$:

\begin{equation}
    (T^{-1} \circ f \circ T)(x) = f(x)
\end{equation}

We call $(T^{-1} \circ f \circ T)(x)$ the \textit{twisted prediction} for $x$, from the \textit{twisted function} $T^{-1} \circ f \circ T$.
To produce a twisted prediction\footnote{The name reflects a physical intuition of introducing a twist in the middle of a rope with fixed endpoints: approaching the middle, the rope twists, and after the middle, it untwists.} on molecules, we sample a random rotation, use it to rotate the input molecule, pass this through the function, and un-rotate the output.
The un-rotation step re-aligns the output to the ``original frame'' of the input molecule, which provides a canonical frame to compare the impact of different transformations on the output.

In contrast to a perfectly equivariant function, a non-equivariant function must have some distinct transformations $T_1, T_2$ where the twisted prediction is different: $(T_1^{-1} \circ f \circ T_1)(x) \neq (T_2^{-1} \circ f \circ T_2)(x)$.
This property motivates analyzing the distribution of twisted predictions over a uniform distribution on the group, which is the usual choice for data augmentation.
For a given $x$:



\begin{equation}
    Z_x(T) \triangleq (T^{-1} \circ f \circ T)(x), \quad T \sim \text{Uniform}(G)
\end{equation}

Its first central moment $\mu(x)$ is the group-averaged, or \textit{twirled} prediction.

\begin{equation}
    \mu(x) \triangleq \mathbb{E}_T [ (T^{-1} \circ f \circ T)(x) ]
\end{equation}

By the twirling formula, $\mu(x)$ is perfectly $G$-equivariant~\citep{Fulton1999-vc}.
The second central moment of the twisted random variable is the covariance: $\Cov_T(Z_x(T)) = \mathbb{E}_T \left[ (Z_x(T) - \mu(x)) (Z_x(T) - \mu(x))^\top \right]$.
The total variance -- the trace of the covariance matrix -- is a natural measure of equivariance error:

\begin{equation}\label{eq:variance_as_equiv_error}
    \frac{1}{D} \mathbb{E}_{x,T} \left[ \| (T^{-1} \circ f \circ T)(x) - \mu(x) \|^2 \right]
\end{equation}

This quantity measures the variance of the twisted predictions around their equivariant mean.
An important property is that it is zero if and only if the function is perfectly equivariant.
Higher moments of $Z_x(T)$ likewise capture multivariate generalizations of skewness and kurtosis of the equivariance error.

\subsection{Loss decomposition}

Twisting and twirling provide machinery to understand a function's behavior around group actions.
We can extend this machinery to analyze losses used to train models under random data augmentation, where each training point is randomly rotated.
Let the data distribution $p(x,y)$ and loss function $l: \mathbb{R}^D \times \mathbb{R}^D \rightarrow \mathbb{R}$ be invariant to $G$.
That is, the joint data distribution $p(x, y)$ for any transformation $T \in G$ satisfies:
$
    p(x,y) = p(T(x), T(y))
$
and for any predictions $z$ and targets $y$, and for all $T \in G$: $l(T(z), T(y)) = l(z,y)$.
These conditions imply that the loss-optimal model is equivariant, and that:
$
    l( (f \circ T)(x), T(y)) = l( (T^{-1} \circ f \circ T)(x), y )
$.
The total loss over all data and transformations is:

\begin{equation}
\mathcal{L}(f) \triangleq \mathbb{E}_{x,y,T} \left[
    l(
        (T^{-1} \circ f \circ T)(x), y
    )
\right]
\end{equation}

We perform a Taylor expansion of the total loss around the twirled prediction $\mu(x)$ (averaged over $T$), and obtain terms involving central moments of the twisted random variable:

$$
\mathcal{L}(f) =
\underbrace{
    \mathbb{E}_{x,y} [ l( \mu(x), y )]
}_{\text{twirled prediction error}} +
\underbrace{
    \frac{1}{2} \mathbb{E}_{x,y} \left[
        \text{tr}\left(
            \mH_l(\mu(x), y)
            \Cov_T[ ( T^{-1} \circ f \circ T )(x)]
        \right)
    \right]
}_{\text{equivariance error}}
+ \mathcal{O}( \| \delta \|^3 )
$$

where $\delta = (T^{-1} \circ f \circ T)(x) - \mu(x)$, $\mH_l(\mu, y)$ is the $D\times D$ Hessian matrix of the loss with respect to its first argument, and $\text{Cov}_T$ is a $D \times D$ covariance matrix over the distribution of transformations $T$.

\nop{
    \subsection{MSE}
}
\begin{proposition}
    \label{prop:mse}
    If $l(z,y) = \frac{1}{D} \|z-y \|^2 $ is mean-squared error, then the total loss decomposes as:

    $
        \mathcal{L}(f) = \mathbb{E}_{x,y} [ l( \mu(x), y )] +
            \frac{1}{D} \mathbb{E}_{x,T} \left[ \| (T^{-1} \circ f \circ T)(x) - \mu(x) \|^2 \right]
    $.

    
\end{proposition}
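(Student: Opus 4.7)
The plan is to prove this as a standard bias–variance-style decomposition, exploiting the fact that MSE has a constant Hessian so the Taylor expansion above truncates exactly. I see two natural routes and would present whichever is cleaner.

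\textbf{Route 1 (direct expansion).} First, I would write
\[
\mathcal{L}(f) = \frac{1}{D}\,\mathbb{E}_{x,y,T}\bigl[\,\|(T^{-1}\circ f\circ T)(x) - y\|^2\,\bigr],
\]
then add and subtract $\mu(x)$ inside the norm and split
\[
(T^{-1}\circ f\circ T)(x) - y = \underbrace{\bigl[(T^{-1}\circ f\circ T)(x) - \mu(x)\bigr]}_{\text{depends on }T} + \underbrace{\bigl[\mu(x) - y\bigr]}_{\text{independent of }T}.
\]
Expanding $\|a+b\|^2 = \|a\|^2 + 2\langle a,b\rangle + \|b\|^2$ and taking the expectation over $T$, the cross term vanishes because $\mathbb{E}_T[(T^{-1}\circ f\circ T)(x)] = \mu(x)$ by definition of the twirled prediction, and the remaining factor $\mu(x) - y$ is constant in $T$. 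Taking the outer expectation over $(x,y)$ then yields the two claimed terms.

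\textbf{Route 2 (via the Taylor decomposition already derived).} Alternatively, I would specialize the general formula from the previous subsection. For $l(z,y) = \frac{1}{D}\|z-y\|^2$, the Hessian in $z$ is $\mH_l(\mu,y) = \frac{2}{D}\mI$, which is constant in $\mu$, and all derivatives of order $\geq 3$ vanish, so the $\mathcal{O}(\|\delta\|^3)$ remainder is exactly zero. Substituting $\mH_l = \frac{2}{D}\mI$ into the Hessian term gives
\[
\frac{1}{2}\mathbb{E}_{x,y}\!\left[\mathrm{tr}\!\left(\tfrac{2}{D}\mI\cdot \Cov_T[(T^{-1}\circ f\circ T)(x)]\right)\right] = \frac{1}{D}\mathbb{E}_{x,T}\bigl[\|(T^{-1}\circ f\circ T)(x) - \mu(x)\|^2\bigr],
\]
using $\mathrm{tr}(\Cov_T[Z_x(T)]) = \mathbb{E}_T\|Z_x(T) - \mu(x)\|^2$. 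This matches the statement.

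There is no real obstacle: the only subtlety is bookkeeping around the order of expectations (swapping $\mathbb{E}_T$ and $\mathbb{E}_{x,y}$ requires the standard mild integrability, which holds since $G$ is compact and we are averaging squared norms), and noting that the group-invariance of $p(x,y)$ and $l$ is used implicitly to rewrite $l((f\circ T)(x), T(y))$ as $l((T^{-1}\circ f\circ T)(x), y)$ before the expansion begins. I would present Route 1 as the main proof since it is self-contained and avoids invoking the truncation argument, with a one-line remark that Route 2 shows why the general expansion becomes exact in the MSE case.
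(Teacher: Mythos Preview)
Your proposal is correct. Your Route 2 is exactly the paper's proof: it observes that $\mH_l(z,y)=\tfrac{2}{D}\mI$ is constant so the $\mathcal{O}(\|\delta\|^3)$ remainder vanishes, and then simplifies the trace term to $\tfrac{1}{D}\mathbb{E}_{x,y}[\mathrm{tr}(\Cov_T[\dots])]$.

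The only difference is emphasis: you would lead with Route 1 (the self-contained add-and-subtract $\mu(x)$ argument, with the cross term vanishing since $\mathbb{E}_T[Z_x(T)-\mu(x)]=0$), whereas the paper presents only Route 2. Route 1 is more elementary and does not rely on the reader trusting the general Taylor expansion displayed just before the proposition; Route 2 is shorter in context and makes explicit that the MSE case is the exact instance of that expansion. Either is fine, and your plan to give Route 1 with a one-line Route 2 remark would arguably be the cleaner exposition.
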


For MSE loss, our Taylor expansion reduces to a version of bias-variance decomposition.
The equivariance error is identical to equation \ref{eq:variance_as_equiv_error} because MSE loss places equal weight on all dimensions. These two terms are central objects of study, so we name them:

\begin{align}
    \Lm &\triangleq \mathbb{E}_{x,y} [ l( \mu(x), y )]  \\
    \Le &\triangleq \frac{1}{D} \mathbb{E}_{x,T} \left[ \| (T^{-1} \circ f \circ T)(x) - \mu(x) \|^2 \right] 
\end{align}





\nop{
    \subsection{Percent loss from equivariance error.}
}
\textbf{Percent of loss from equivariance error.}
Denoting model parameters as $\theta$, under MSE loss, we can express the total loss exactly as 
$
    \mathcal{L}(\theta) = \Lm(\theta) + \Le(\theta)
$.
As all three terms are strictly non-negative, this implies:

\begin{equation}
    \text{\% MSE loss from equivariance error} = \frac{ \Le(\theta) }{ \mathcal{L}(\theta) }
\end{equation}

\nop{
    \subsubsection{Convex losses}
}
\textbf{Generalization to convex losses}. We can further define a generalized measure of the percent of loss from equivariance error for any convex loss function with non-negative outputs, such as KL divergence or cross-entropy.
By Jensen's inequality, we have
$ \Lm(\theta) \leq \mathcal{L}(\theta)
$
and both terms are non-negative. Furthermore, the two terms are equal if and only if the model is exactly equivariant.
For convex losses, this motivates defining: 

\begin{align}\label{eq:convex_le_def}
    \Le(\theta) &\triangleq \mathcal{L}(\theta) - \Lm(\theta) &&\textit{(for convex losses)}
\end{align}

as the non-negative difference, which is compatible with:
$
\text{\% loss from equivariance error} = \Le(\theta) / \mathcal{L}(\theta).
$



\nop{
    \subsection{Finite-sample estimates.}
}
\textbf{Finite sample estimators.}
Denoting twisting as $\hat{Z}_i(x) = (T_i^{-1} \circ f \circ T_i)(x)$, the naive Monte Carlo  estimates with $N$ group samples are $\hat{\mu}(x) = \frac{1}{N} \hat{Z}_i(x)$, and for MSE loss $\widehat{\Lm}(x) = \frac{1}{D}\|\hat{\mu}(x) - y\|^2$, $\widehat{\Le}(x)$ = $\frac{1}{ND} \sum_{i=1}^N \| \hat{Z}_i(x) - \hat{\mu}(x) \|^2$. 
However, $\widehat{\Le}$ is a variance term, so it is biased relative to the true population statistic, unless adjusted by the Bessel correction $N/(N-1)$.
In \S\ref{sec:unbiased_finite_sample_estimators}, we derive unbiased finite-sample estimators for $\Lm$, $\Le$, and a bias-corrected finite-sample estimator for the percent loss from equivariance error:
On 3D molecular tasks, we find that neural networks are typically smooth enough that only five to ten rotation samples are necessary to achieve a stable estimate of the twirled prediction and each loss component (\S\ref{sec:sensitivity_from_num_rotations}).


Our derivations provide a principled framework for measuring and understanding degrees of learned equivariance. 
An important property is that for exactly equivariant architectures, $\Le(\theta) = 0$, so that $\mathcal{L}(\theta) = \Lm(\theta)$.
We remark that sometimes, non-equivariant models may be trained without data augmentation, so that this decomposition may not apply on the training loss.
We stress that as long as we aim for these models to behave in an equivariant manner, then the loss decomposition is valid for held-out loss.

\section{Experiments}\label{section:experiments}

To gain insight into the empirical learning behavior of non-equivariant models, we apply our loss decomposition framework to three high-dimensional learning problems on 3D molecules, each with a distinct task and a modern non-equivariant model architecture.
For each task, we follow the standard training procedure described in its original publication. 
The tasks span predictive regression tasks, autoencoding, as well as generative modeling.
Notably, all tasks use a mean-squared error loss, so our framework provides an exact decomposition of  $\mathcal{L}(f)$ into $\Lm$ and $\Le$. 
We report both of these metrics, as well as the percentage of the total loss attributable to the model's lack of equivariance, on a held-out dataset during training.
We provide complete details on methods in \S\ref{section:methods}.

\begin{itemize}
    \item \textbf{Neural Interatomic Potential (NNIP)}: We consider force prediction with EScAIP~\citep{qu2024escaip}, a graph transformer architecture.
    The model predicts a 3D force vector for each atom based on density functional theory, mapping an input molecule with $N$ atoms to an output in $\mathbb{R}^{3N}$. This task is physically equivariant to the special orthogonal group $SO(3)$ acting on atom coordinates in $\mathbb{R}^3$.
    \item \textbf{Probabilistic Flow Matching}: We study a generative modeling task with Prote\'ina~\citep{geffner2025proteina}, a transformer-based architecture with similarities to AlphaFold3.
    The model learns to approximate the velocity field of a probability flow that transforms random noise into structured protein backbones. For a molecule with $N$ alpha carbon atoms, the network maps noised atom coordinates and a time $t \in [0, 1]$ to a velocity vector in $\mathbb{R}^{3N}$. The learning task is made rotationally equivariant through data augmentation, aligning it with $SO(3)$ acting on atom coordinates in $\mathbb{R}^3$.
    \item \textbf{Denoising Voxelized Atomic Densities}: We analyze a denoising autoencoder task with  VoxMol~\citep{pinheiro2023voxmol,nowara2025needequivariantmodelsmolecule}, a non-equivariant 3D convolutional neural network. Molecules are represented as densities in a cubic voxel grid. For a grid length $g$ and $a$ atom types, the input and output are tensors of shape $[g, g, g, a]$. This learning task is made rotationally equivariant through data augmentation using 16 axis-preserving 90-degree rotations of a cube, which do not introduce discretization artifacts due to aliasing. These rotations are a subset of the full octohedral group $O$.
\end{itemize}

\subsection{Force field prediction with EScAIP}

We trained EScAIP 6M on a subset of SPICE with 950k training examples used by \cite{qu2024escaip} for 30 epochs with batch size 64. SPICE is a dataset with of small molecule 3D conformers with energies and forces computed by quantum-mechanical density functional theory~\citep{spice}. We varied model size from 1M, 4M and 6M, varied training set size from 950k, 50k, 5k, and 500 (with batch size 1), and varied the optimizer or learning rate.
In this task, an equivariant model would output the same yet rotated force prediction, when the input molecule rotates.
We observe the following:

\begin{itemize}
    \item \textbf{Equivariance is learned early and quickly, in a manner robust to training set size, model size, and optimizer and learning rate.} The percent validation loss from equivariance error rapidly plummets in the first stage of training to 0.1\% within 1k-10k training steps (Fig. \ref{fig:main-escaip}A-B). Notably, this speed is independent of epoch or training set size - with a 950k training set, this occurs 25\% through the first epoch. Training with 500 datapoints with batch size 1, this occurs at the fourth epoch. The dip is least affected by changing model size (Fig. \ref{fig:main-escaip}E), and most affected by the optimizer and learning rate (Fig. \ref{fig:main-escaip}F).
    \item \textbf{Equivariance is learned quickly because it is an easier learning task than the main prediction task.} The loss landscape (Fig. \ref{fig:1}C) for the equivariance error is much smoother and better conditioned, with a 1,000x lower condition number, than the loss landscape for the twirled prediction error.
    \item \textbf{After a near-universal dip, percent loss from equivariance error can increase mildly.} In the default setting, the percent increases from 0.1\% to 0.3\%. This is explained by a plateau in the equivariance error while the twirled prediction error continues to decrease (Fig. \ref{fig:main-escaip}C).
    \item \textbf{Typical models converge to being nearly equivariant, with percent validation loss from equivariance error under 0.1\%.} The exception is training on 500 or 5k examples only: equivariance error continues to increase as training progresses, whereas equivariance error decreases in the long-term for larger training set sizes (Fig. \ref{fig:main-escaip}D, Supp. Fig. \ref{fig:supplement-escaip-vary-dataset-loss-curves}).
\end{itemize}

\begin{figure}[ht]
    \centering
    \includegraphics[width=1\linewidth]{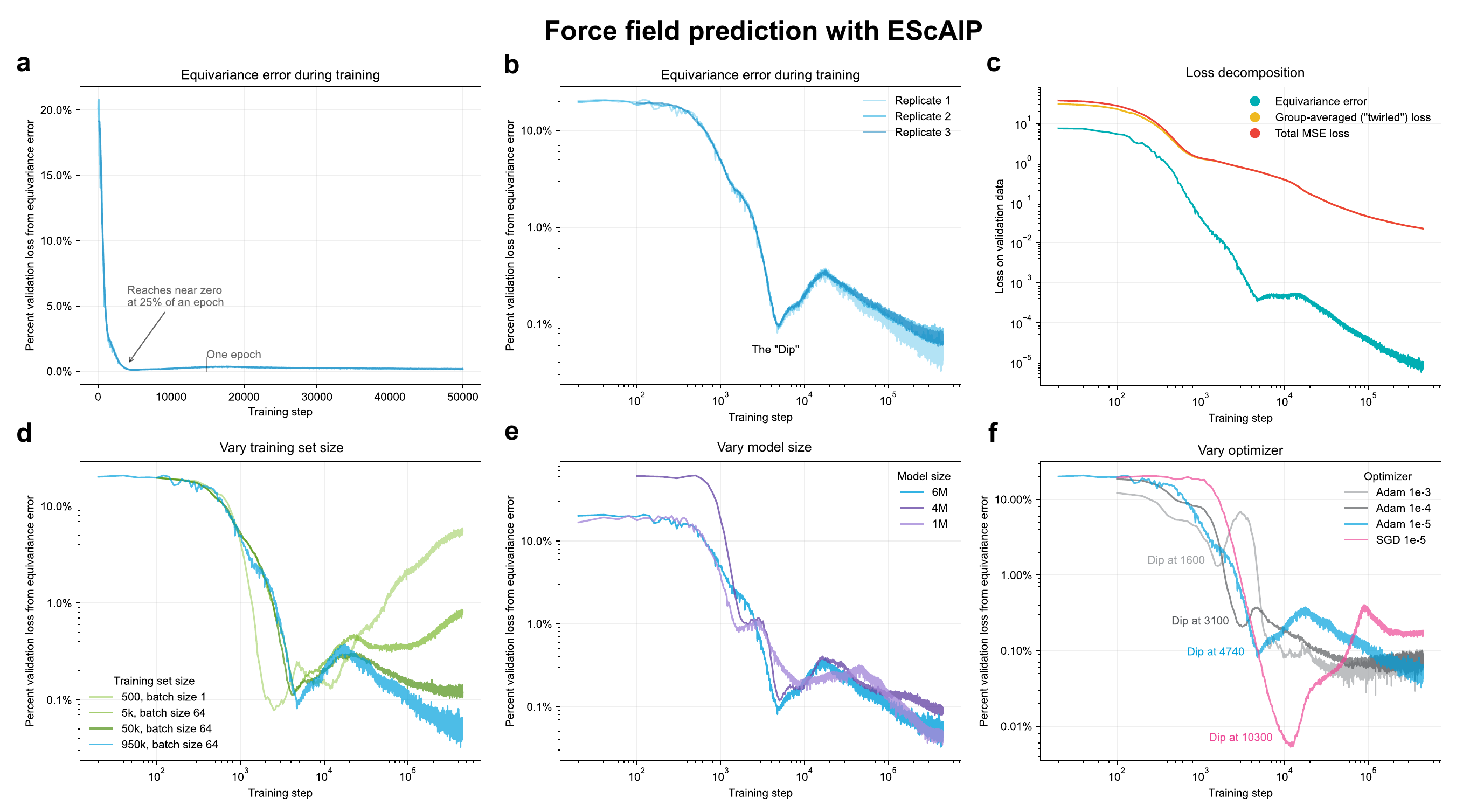}
    \caption{Training dynamics of learning equivariance in EScAIP (Force field prediction). (a-c) Validation losses and percent validation loss from equivariance error during training, early in training (a), with log-log axes (b), and decomposed into separate terms (c).
    (d-f) Impact of varying training set size (d), model size (e), and optimizer or learning rate (f).}
    \label{fig:main-escaip}
\end{figure}

\subsection{Flow matching with Prote\'ina}

The conditional flow matching objective at each time optimizes a mean-squared error loss, which enables us to apply our loss decomposition to study the percent loss from equivariance error.
In this task, an equivariant model would output the same yet rotated velocity, when the input noised molecule rotates. Such equivariance is a common desired property for molecular generative models \cite{geffner2025proteina,Abramson2024-wz-alphafold3}.
While many users may care more about final generative quality metrics, the MSE loss plays a critical role in training, monitoring, and in defining the target loss-optimal velocity field. In particular, any non-equivariance in the model's learned generative distribution is caused by a non-equivariant velocity field. This motivates tracking and understanding the percent loss from equivariance error of flow matching models at various $t$, which may enable refining training strategies to improve equivariance.

\begin{figure}[ht]
    \centering
    \includegraphics[width=1\linewidth]{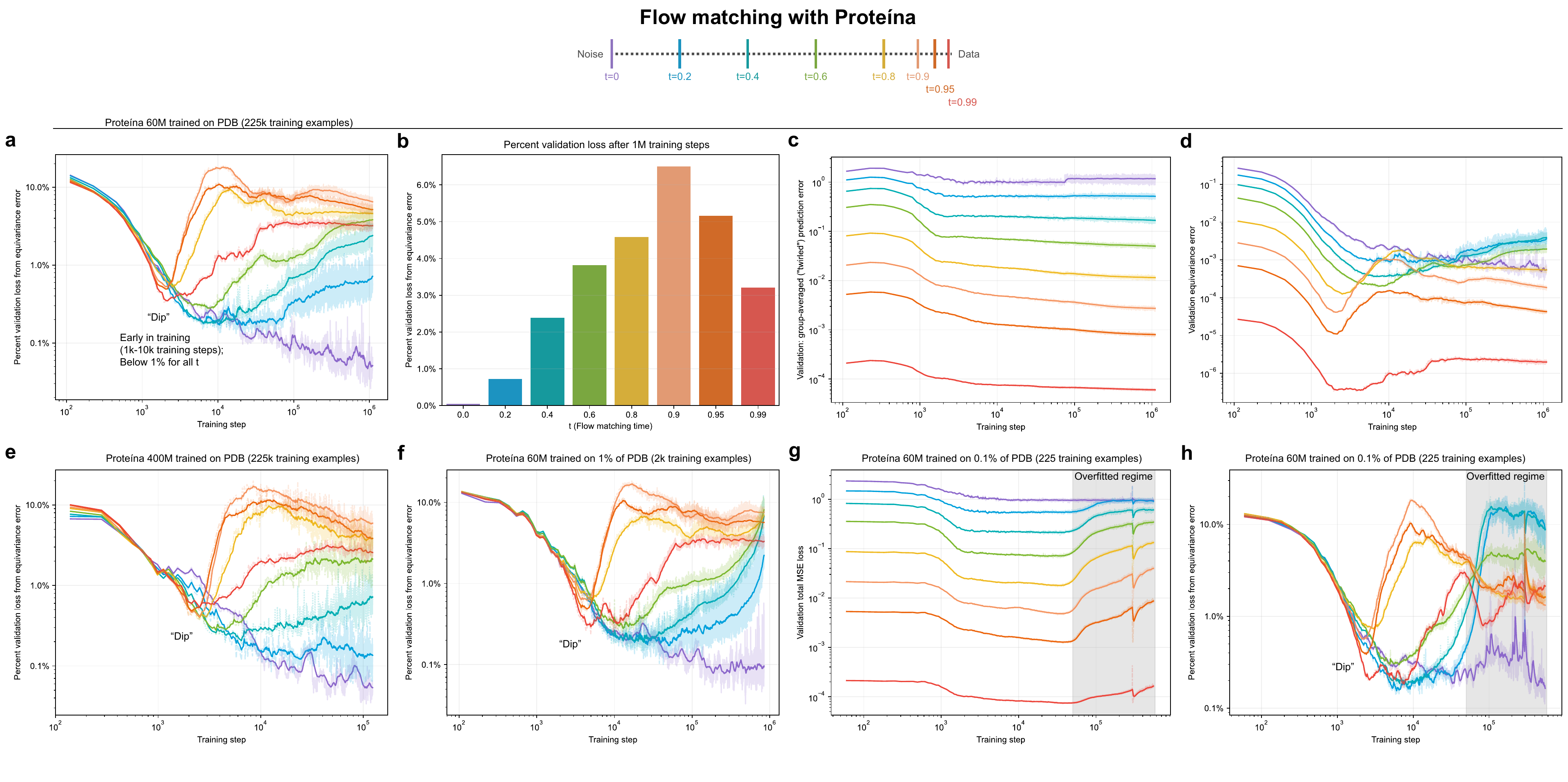}
    \caption{Training dynamics of learning equivariance in Prote\'ina (Flow matching).
    Colors indicate flow matching time, with noise at $t=0$ and data at $t=1$.
    (a) Percent validation loss from equivariance error during training.
    (b) Bar plot of the percent validation loss from equivariance error, by flow matching time, at a final checkpoint after 1M training steps.
    (c-d) Validation losses by training step.
    (e-h) Impact of varying model size (e), training set size (f-h).
    }
    \label{fig:main-proteina}
\end{figure}

We trained Prote\'ina at 60M without triangular attention and 400M with triangular attention on the full Protein databank (PDB) dataset with 225k training examples. We also trained models on 1\% of the PDB with 2k examples and 0.1\% with 200 examples.
Flow matching trains a model jointly over $t$, flow matching time, ranging from $t=0$ for noise and $t=1$ for data. We measure metrics at $t=0,0.2, 0.4, 0.6, 0.8, 0.9, 0.95,$ and $0.99$, and use red colors for high $t$ close to the data, and blue-purple colors for low $t$ near noise in Figure \ref{fig:main-proteina}.

We observe that the equivariance learning dip occurs early for all $t$, in a manner robust to training set size and model size. Following the dip at 1k-10k training steps (Fig. \ref{fig:main-proteina}A), low $t$ (closer to noise) are more equivariant, while high $t$ (closer to data) are less equivariant, which spike following the dip. This holds for the 400M model (Fig. \ref{fig:main-proteina}E), and 60M model trained on 1\% and 0.1\% of the PDB (Fig. \ref{fig:main-proteina}F-H). Interestingly, the dip occurs at the same number of training steps despite different training set sizes, so it occurs 4\% through one epoch when trained on the full PDB, but occurs around epoch 53 when trained on 0.1\% of the PDB.

After 1M training steps, the model's equivariance error is low across $t$, with a maximum value around $\sim$6\% at $t=0.90$ , though it is lower at the extremes $t=0.99$ at 3\% and $t=0$ at 0.04\% (Fig. \ref{fig:main-proteina}B). This indicates that the time-conditional velocity field learned by the model is approximately equivariant.
Our finding that at 1M training steps, $t=0.90$ is the least equivariant timestep is relevant for designing training augmentation or test-time strategies for improving equivariance. Notably, this finding highlights the advantage of our loss decomposition framework. \cite{vonessen2025tabascofastsimplifiedmodel} measure equivariance error in Proteina by the variance of the normalized twisted prediction, but this is not interpretable as a percent of loss, and thus conflates task difficulty, which gets easier as $t \to 1$, with equivariance error. We correct for this issue, and find that $t = 0.9$ is the most problematic time for non-equivariance, whereas they find $t = 0.5$ instead.



\subsection{Denoising voxelized atomic densities with VoxMol}

\begin{figure}[ht]
    \centering
    \includegraphics[width=0.8\linewidth]{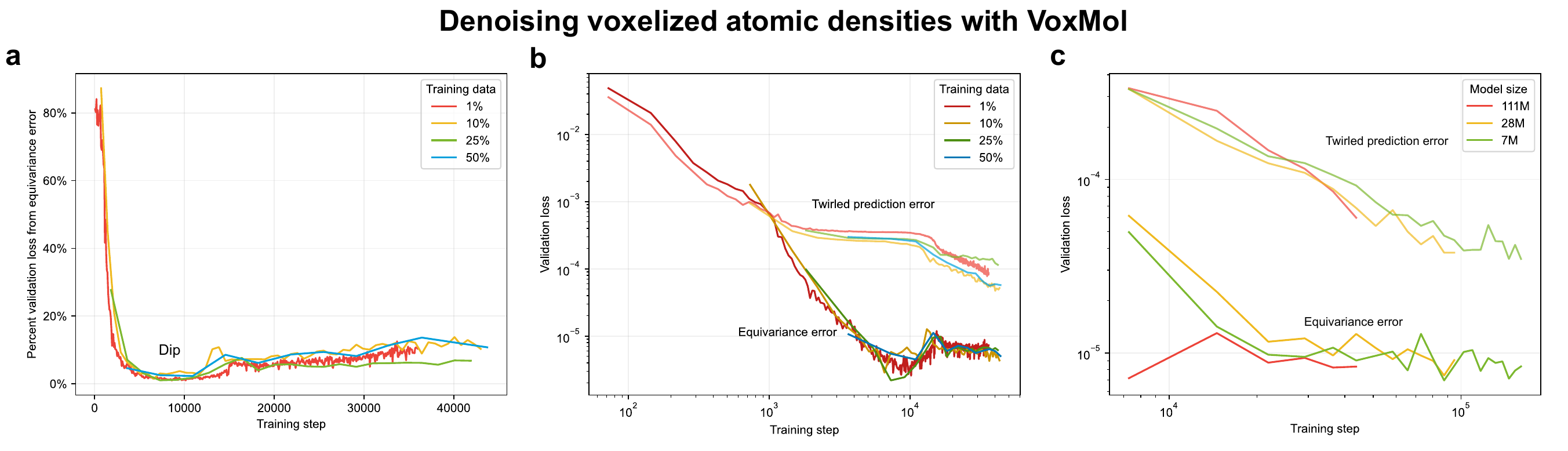}
    \caption{Training dynamics of learning equivariance in VoxMol (Denoising voxelized atomic densities).
    (a) Percent validation loss from equivariance error during training.
    (b-c) Validation losses by training step.
    }
    \label{fig:main-voxmol}
\end{figure}

We trained VoxMol 111M on GEOM-drugs, a dataset of 3D structures of drug-like molecules with 1.1M training examples.
We also trained models on 1\% (11k), 10\% (110k), 25\% (275k), and 50\% (550k) examples, and models of varying size: full (111 M parameters), small (28 M), and tiny (7 M).
In this autoencoding task, an equivariant model would output a rotated predicted reconstruction when the input molecule rotates; this is a commonly desired property when using the decoder as a generative model \cite{pinheiro2023voxmol}.
We observe that the equivariance learning dip occurs early for all $t$, in a manner robust to training set and model size. Across the training set sizes, all models rapidly reduce their percent validation loss from equivariance error from an initial 80\% to $\sim$2\% within 1k-10k training steps (Fig. \ref{fig:main-voxmol}A-B). At 50k training steps, models have around 5-10\% validation loss from equivariance error. Beyond 50k training steps, the twirled prediction error continues to decrease while the equivariance error plateaus, or decreases more slowly, below 1e-5 (Fig. \ref{fig:main-voxmol}C).

\subsection{Loss Landscape Analysis}

To better understand the initial dip, we studied loss landscapes for $\Lm$ and $\Le$ at early checkpoints (500 steps).
We computed the Hessian of each loss on a training batch for a subset of 33k parameters including non-linear layers for EScAIP, 1.5k parameters in Prote\'ina's linear head, and 6.9k parameters in a final layer of VoxMol.
For EScAIP, we measured condition numbers around 1e9 for $\Lm$ and 1e6 for $\Le$ ($\sim$1,000x smaller).
For Prote\'ina, we measured 2e10 for $\Lm$ and 1e8 for $\Le$ ($\sim$100x smaller).
For VoxMol, we measured 5e9 and 6e8 respectively ($\sim$10x smaller).
We calculate condition numbers for $\Lm$ and $\Le$ using the largest positive and smallest positive eigenvalues for each loss, and plot them over 20 minibatches in figures \ref{fig:supplement-escaip-condition_numbers}, \ref{fig:supplement-proteina-condition_numbers}. 
For loss landscape plotting, we chose two axes for plotting using the largest positive and smallest positive eigenvector on the total loss, and used the same step size and grid for $\Lm$ and $\Le$.
In both models, we find that $\Le$ has a substantially smoother loss landscape than $\Lm$ (Fig. \ref{fig:1}C).

\subsection{Do Latent Representations Learn to Respect Equivariance?}

The three model architectures studied here have substantial differences (Fig. \ref{fig:1}D), yet they display some similarities in their training dynamics of learning equivariance. A natural question is whether their latent representations learn to respect equivariance during training. We provide an analysis here, with further details in \S\ref{section:methods}.

\begin{itemize}
    \item \textbf{EScAIP's latent representation is rotation-invariant.} EScAIP uses rotation-invariant features, and all intermediate layers maintain rotation-invariance. Thus, the final representation is exactly rotation-invariant by design (Fig. \ref{fig:1}D). Note that the whole architecture is not invariant or equivariant due to the final prediction head (see eq. \ref{eq:escaip-linear-head}).
    \item \textbf{Prote\'ina's latent representation is approximately equivariant.} The architecture acts directly on 3D C-alpha coordinates, and does not use rotation-invariant or equivariant features. Its final latent sequence representation is mapped by a linear head into the model output (Fig. \ref{fig:1}D). Thus, when the model is empirically approximately equivariant, the final latent is also approximately equivariant.
    \item \textbf{VoxMol's latent representations are not equivariant nor invariant.
    } Unlike EScAIP and Prote\'ina where first-principles reasoning suffices, we had to study VoxMol empirically. To evaluate if latents were equivariant, for an input molecule and a given rotation, we measured the cosine similarity between the rotated latent, and the latent of the rotated molecule. We found a median of 0.6, comparable to the cosine similarity between latents of different molecules, indicating a lack of equivariance (Fig. \ref{fig:supplement-cosine-similarity-violin}). To evaluate if latents were invariant, we measured the cosine similarity between the latents of different rotations of the same molecule as 0.64, which is statistically significantly higher but with a small effect size than the cosine similarity between latents of different molecules at 0.58.
\end{itemize}
\newcommand{\wep}{ \mathbf{W_{\mathcal{E}\perp}} }
\newcommand{\we}{ \mathbf{W_{\mathcal{E}}} }
\newcommand{\aep}{ \mA_\mathcal{E\perp} }

\section{Impact of $\Le < \Lm$ on Gradients and Parameters}\label{section:theory}

Our empirical results showed that early in training, equivariance error rapidly diminishes, such that $\Le$ is small relative to $\Lm$.
In this section, we investigate empirically and theoretically the connections between equivariance error, gradients and parameters.
For non-negative convex losses, the loss is connected to the gradients by an analogous decomposition (eq. \ref{eq:convex_le_def}):

\begin{align}
    \nabla \mathcal{L}(\theta) &= \nabla \Lm(\theta) + \nabla \Le(\theta)    \\
    \nabla \mathcal{L}(\theta) &= \nabla \Lm(\theta) && \text{\textit{(for exactly equivariant models)}}
\end{align}

When $\nabla \Le(\theta)$ vanishes, the model is said to follow ``equivariant learning dynamics''. 
While gradient norms in general have a complex relationship with loss norms, in \S\ref{sec:loss_ratio_vs_gradient_norms} we show that in certain regions of parameter space, as $\Le$ shrinks, $\nabla \mathcal{L}(\theta)$ can become more similar to $\nabla \Lm(\theta)$. Experimentally, we find moderate-to-strong correlations between the percent loss from equivariance error, and the similarity of $\nabla \mathcal{L}(\theta) \approx \nabla \Lm(\theta)$, throughout training.

In \S\ref{sec:parameter_space_decomp}, we consider a parameter decomposition framework from \cite{nordenfors2025optimization}, which shows that when certain assumptions hold, model parameters can be othogonally decomposed:

\begin{equation}
    \theta = \te + \tep
\end{equation}

where $\tep$ is the model's parameter deviation from the subspace of perfectly equivariant functions $\mathcal{E}$. 
These assumptions do not always hold, but they do hold for EScAIP's force prediction head, enabling us to prove a quadratic relationship between $\Le$ and parameter deviation from the subspace of exactly equivariant functions for the modern graph transformer. Experimentally, we find the two are closely linked over training with Spearman correlation = 0.99.
Finally, in general settings where the parameter decomposition does hold, we use our loss decomposition to derive some additional relationships between $\|\tep\|$ to $\Le$ for MSE loss.

\subsection{Equivariance error and gradients}
\label{sec:loss_ratio_vs_gradient_norms}




We shall consider the relative loss ratio from equivariance error for non-negative convex losses:

\begin{equation}
    \epsilon(\theta) \triangleq \frac{\Le(\theta)}{\Lm(\theta)}
\end{equation}




In proposition \ref{prop:purity-epsilon-smooth}, we consider a mild assumption, commonly satisfied in practice, that the deep neural network is analytic, i.e., it is composed from analytic activation functions, and that the losses are analytic. By standard real analysis results, analyticity implies smoothness on compact subsets of parameter space (such as parameters explored in training; Lemma \ref{lemma:analytic_smooth}).
A function $f$ is $M_f(U)$-smooth in a compact subset $X$ if $\| \nabla f(x) - \nabla f(x')\| \leq M_f(U) \|x-x'\|$ for all $x, x' \in X$.
Smoothness in turn can be used to derive a bound on the similarity of $\mathcal{L}(\theta) \approx \nabla \Lm(\theta)$ in terms of $\epsilon(\theta)$.
In practice, the bound can be weak when $M_f(U)$ can be large, and the bound becomes vacuous near saddle points where $\| \nabla \Lm(\theta) \| \to 0$. Nevertheless, this result sheds light on the structure of the relationship between $\epsilon(\theta)$ and the gradients. 

\nop{
    \subsubsection{purity-smooth}
}
\begin{proposition}\label{prop:purity-epsilon-smooth}
    Let the model $f_\theta$ be analytic (i.e., a deep neural network constructed from analytic activation functions), and suppose the analytic loss satisfies $\mathcal{L}(\theta) = \Lm(\theta) + \Le(\theta)$ (i.e., for convex losses via \ref{eq:convex_le_def}).
    Then, for any compact subset $U$ in the parameter space of $f_\theta$ where $\epsilon(\theta)$ is well-defined (the denominator is non-zero) and that includes an optima $\theta^*$ with $\epsilon(\theta^*) = 0$, there exists a finite constant $M_\epsilon(U) = \sup_{\theta \in U} \| \nabla^2 \epsilon(\theta) \|$ such that the approximation $\nabla \mathcal{L}(\theta) \approx \nabla \Lm(\theta)$ holds for all $\theta \in U$ with relative error bounded by:

    \begin{equation}
            \frac{ \| \nabla \mathcal{L}(\theta) - \nabla \Lm(\theta) \|
        }{ \| \nabla \Lm(\theta) \| } \leq \epsilon(\theta) + \frac{
            \Lm(\theta)
        }{ \| \nabla \Lm(\theta) \| }
        \sqrt{2 M_\epsilon(U) \epsilon(\theta)} 
    \end{equation}
\end{proposition}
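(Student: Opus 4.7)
The plan is to decompose the gradient difference using the structural identity $\mathcal{L}(\theta) = \Lm(\theta) + \Le(\theta)$ together with the definition $\Le(\theta) = \epsilon(\theta)\, \Lm(\theta)$, and then bound $\|\nabla \epsilon\|$ in terms of $\epsilon$ itself. First, by linearity of the gradient, $\nabla \mathcal{L}(\theta) - \nabla \Lm(\theta) = \nabla \Le(\theta)$, so the left-hand side of the target inequality equals $\|\nabla \Le(\theta)\| / \|\nabla \Lm(\theta)\|$. Applying the product rule to $\Le = \epsilon\, \Lm$ and the triangle inequality gives
\[
\|\nabla \Le(\theta)\| \;\leq\; \Lm(\theta)\, \|\nabla \epsilon(\theta)\| \,+\, \epsilon(\theta)\, \|\nabla \Lm(\theta)\|.
\]
Dividing by $\|\nabla \Lm(\theta)\|$ immediately produces the additive $\epsilon(\theta)$ summand of the target bound, reducing the problem to showing $\|\nabla \epsilon(\theta)\| \leq \sqrt{2\, M_\epsilon(U)\, \epsilon(\theta)}$.

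For this reduced claim I would invoke the standard descent-lemma inequality: if a non-negative function $g$ is $M$-smooth, then $\|\nabla g(x)\|^2 \leq 2 M g(x)$. The proof of this auxiliary fact is to apply the quadratic upper bound from $M$-smoothness at the point $y = x - M^{-1}\nabla g(x)$, obtaining $g(y) \leq g(x) - \tfrac{1}{2M}\|\nabla g(x)\|^2$, and then use $g(y) \geq 0$. Applied to $g = \epsilon$ with $M = M_\epsilon(U)$ at any $\theta \in U$, this produces exactly the square-root factor needed, and substitution into the product-rule bound completes the proof.

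The main obstacle is establishing that $\epsilon$ is genuinely smooth in a sense strong enough for the descent inequality to apply, and for the relevant auxiliary point $y$ to lie in the smooth region. The path I would take: since $f_\theta$ and the loss are analytic, $\Lm$ and $\Le$ are analytic on parameter space; on $U$ the denominator $\Lm$ is bounded away from zero by hypothesis, so the quotient $\epsilon$ is analytic on $U$. By the analyticity-implies-smoothness lemma and compactness of $U$, the continuous map $\theta \mapsto \|\nabla^2 \epsilon(\theta)\|$ attains a finite maximum, giving $M_\epsilon(U) < \infty$. Non-negativity of $\epsilon$ follows from $\Le, \Lm \geq 0$, and the presence of $\theta^* \in U$ with $\epsilon(\theta^*) = 0$ confirms that $\epsilon$ does attain zero, so the descent-lemma conclusion is consistent. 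The subtlest technical point is that the descent step $y = \theta - M_\epsilon(U)^{-1}\nabla \epsilon(\theta)$ might in principle leave $U$; one resolves this by taking $U$ convex and large enough to contain this step (which lies within $\|\theta - \theta^*\|$ of $\theta$ by the Hessian bound, since $\nabla \epsilon(\theta^*) = 0$), or equivalently by replacing $M_\epsilon(U)$ by the supremum over a slightly enlarged compact set on which $\epsilon$ remains analytic. The bound is most informative precisely in the regime of interest where $\theta$ is already near an equivariant optimum, where such a containment is readily arranged.
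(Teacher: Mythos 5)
Your proposal follows essentially the same route as the paper's proof: write $\Le(\theta) = \epsilon(\theta)\,\Lm(\theta)$, apply the product rule and triangle inequality to obtain the additive $\epsilon(\theta)$ term, and then bound $\|\nabla \epsilon(\theta)\|$ by $\sqrt{2 M_\epsilon(U)\,\epsilon(\theta)}$ using the descent-lemma consequence of $M_\epsilon(U)$-smoothness on the compact set $U$ anchored at $\theta^*$ with $\epsilon(\theta^*)=0$. The only substantive difference is that you explicitly flag and resolve the containment of the auxiliary descent point in the smooth region, a technicality that the paper's Lemma~\ref{lemma:analytic_smooth} treats only loosely; this is a refinement of the same argument rather than a different method.
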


\begin{proof} Provided in \ref{proof:epsilon-smooth}
\end{proof}

Near the global optima, we can derive a different bound in proposition \ref{prop:purity-KL} using the same assumptions, but employing a theorem known as the Kurdyka-Łojasiewicz (KŁ) inequality \citep{Kurdyka1998,dereichkassing}.
The Kurdyka-Łojasiewicz (KŁ) inequality is a generalization of the Polyak-Lojasiewicz condition, itself a generalization of convexity, which has been used to study convergence rates of stochastic gradient descent in conditions that are more realistic to deep neural networks \citep{pmlr-v162-scaman22a}. 
The Kurdyka-Łojasiewicz (KŁ) inequality holds locally under mild conditions, only requiring analyticity, and states that there exists a compact local neighborhood $U$ around any critical point $\theta^*$ and constants $c>0, \alpha \in [1,2)$ such that, for all $\theta \in U$:

\begin{equation}
    \| \nabla f(\theta) \|^2 \geq c | f(\theta) - f(\theta^*)|^\alpha
\end{equation}

This is mathematically an inequality in the opposite direction as smoothness. Whereas smoothness ensures the function does not change too quickly, the KŁ inequality says the function does not change too slowly, which is important for gradient descent convergence rates. For our purposes, smoothness gives an upper bound on the numerator, while the KŁ inequality provides a lower bound on the denominator. Combining the two gives the ratio bound.

\nop{
    \subsubsection{KL condition prop}
}
\begin{proposition}\label{prop:purity-KL}
    Let the model $f_\theta$ be analytic (i.e., a deep neural network constructed from analytic activation functions), and suppose $\mathcal{L}(\theta) = \Lm(\theta) + \Le(\theta)$ (i.e., for convex losses via \ref{eq:convex_le_def}) is analytic and has a minimum at 0.
    Then, there exists a compact neighborhood $U$ around the global minimum $\theta^*$ and finite constants $c >0, M_{\Le(\theta)}(U), \alpha \in [1, 2)$ such that for all $\theta \in U$, the approximation $\nabla \mathcal{L}(\theta) \approx \nabla \Lm(\theta)$ holds with relative error bounded by:

    \begin{equation}
        \frac{ \| \nabla \mathcal{L}(\theta) - \nabla \Lm(\theta) \| }{ \| \nabla \Lm(\theta) \| } \leq \sqrt{\frac{2M_{\Le(\theta)}(U)}{c}} \cdot \sqrt{ \frac{\Le(\theta)}{ \Lm(\theta)^\alpha } }
    \end{equation}
\end{proposition}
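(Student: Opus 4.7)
The plan is to bound the numerator and denominator of the target ratio separately, exploiting smoothness of $\Le$ to upper-bound the former and the Kurdyka-Łojasiewicz inequality on $\Lm$ to lower-bound the latter. First I would use that $\nabla \mathcal{L}(\theta) - \nabla \Lm(\theta) = \nabla \Le(\theta)$ from the decomposition, so the numerator is simply $\|\nabla \Le(\theta)\|$. Next I would observe that since $\mathcal{L}(\theta^*) = 0$ is a global minimum and $\mathcal{L} = \Lm + \Le$ with both summands non-negative, we must have $\Lm(\theta^*) = \Le(\theta^*) = 0$; in particular $\theta^*$ is a critical point of each summand individually, which is exactly what is needed to apply Łojasiewicz to $\Lm$ at $\theta^*$.

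For the numerator I would invoke the standard consequence of smoothness for non-negative functions: if $g \geq 0$ is $M$-smooth, then $\|\nabla g(\theta)\|^2 \leq 2M\, g(\theta)$, which follows from the descent lemma applied to the gradient step $\theta \mapsto \theta - \tfrac{1}{M}\nabla g(\theta)$ together with $g \geq 0$. Analyticity of $\Le$ (inherited from the analyticity of $\mathcal{L}$ and $\Lm$ via \ref{eq:convex_le_def}) combined with Lemma \ref{lemma:analytic_smooth} gives a finite smoothness constant $M_{\Le(\theta)}(U)$ on any compact $U$, so $\|\nabla \Le(\theta)\| \leq \sqrt{2 M_{\Le(\theta)}(U)\, \Le(\theta)}$ for $\theta$ in a sub-neighborhood where the descent step remains in $U$.

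For the denominator I would apply the Łojasiewicz gradient inequality (as stated just before the proposition) to the analytic function $\Lm$ at its critical point $\theta^*$: there exist a compact neighborhood $U'$ of $\theta^*$ and constants $c > 0$, $\alpha \in [1,2)$ with $\|\nabla \Lm(\theta)\|^2 \geq c\, \Lm(\theta)^\alpha$ on $U'$ (using $\Lm(\theta^*) = 0$). Taking $U$ to be the intersection of the two neighborhoods and dividing the two bounds yields
\[
\frac{\|\nabla \Le(\theta)\|}{\|\nabla \Lm(\theta)\|} \leq \frac{\sqrt{2 M_{\Le(\theta)}(U)\, \Le(\theta)}}{\sqrt{c\, \Lm(\theta)^\alpha}} = \sqrt{\frac{2 M_{\Le(\theta)}(U)}{c}} \cdot \sqrt{\frac{\Le(\theta)}{\Lm(\theta)^\alpha}},
\]
which is the claimed bound.

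The main obstacle is reconciling the two neighborhoods. The smoothness-based upper bound on $\|\nabla \Le\|$ tacitly needs the $1/M_{\Le(\theta)}(U)$-descent step to stay inside the set on which the Hessian is controlled, while KŁ only holds in a possibly small neighborhood of $\theta^*$; this is handled by a standard nested-neighborhood argument, choosing $U$ small enough that both conditions hold simultaneously. A secondary subtlety is well-definedness of the ratio: any non-global critical point of $\Lm$ (where $\|\nabla \Lm\| = 0$ but $\Lm > 0$) must be excluded, but by continuity such points cannot lie in a sufficiently small neighborhood of the strict minimizer $\theta^*$, so shrinking $U$ further resolves this.
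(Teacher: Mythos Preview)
Your proposal is correct and follows essentially the same route as the paper's proof: bound $\|\nabla \Le(\theta)\|$ via smoothness (Lemma~\ref{lemma:analytic_smooth}) using $\Le(\theta^*)=0$, bound $\|\nabla \Lm(\theta)\|$ below via the Kurdyka--\L ojasiewicz inequality using $\Lm(\theta^*)=0$, and take the ratio. Your additional remarks on why $\theta^*$ minimizes each summand, on reconciling the two neighborhoods, and on excluding spurious critical points of $\Lm$ are all sound refinements that the paper leaves implicit.
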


\begin{proof}
    Provided in \S\ref{proof:purity-KL}.
\end{proof}

\nop{
    \subsubsection{Experimental investigation}
}
\textbf{Experimental investigation.}
Our propositions \ref{prop:purity-epsilon-smooth}, \ref{prop:purity-KL} show that in certain conditions, a function of the loss ratio upper bounds the gradient norm ratio.
We empirically investigated this by measuring the loss ratio and the gradient norm ratio during training, which we plot in \S\ref{sec:grad_norm_ratio_plots}. We find strong log-log correlations between the loss ratio and the gradient norm ratio of Pearson R = 0.75 over training in EScAIP, and statistically significant R = 0.23 to 0.95 for Prote\'ina across all times. Interestingly, we observe stronger correlations at smaller ``lag'' times, suggestive of stable ``coefficients'' within basins, with a sparse number of transitory windows where the models seem to ``move between basins''. Collectively, these experimental results provide complementary insight into the relationship between relative equivariance error and model gradients.

\subsection{Equivariance error and deviation from equivariant parameter subspace}\label{sec:parameter_space_decomp}

In the proceeding analysis, we adopt \cite{nordenfors2025optimization}'s mathematical framework for analyzing neural network parameters in terms of equivariant and non-equivariant parameter subspaces; under certain conditions, model parameters can be decomposed into orthogonal components: $\theta = \te + \tep$.
This framework relies on three assumptions: (i) the symmetry group is compact and acts on finite-dimensional hidden spaces; (ii) the neural net non-linearities are equivariant; and (iii) the loss is invariant.
Under these conditions, the total parameter space is an inner product space, and the subspace of perfectly equivariant functions $\mathcal{E}$ is a linear subspace, which together enable the orthogonal decomposition $\theta = \te + \tep$.
This framework can be applied to a broad class of modern neural network operations and architectures, including fully connected layers with non-linearities, convolutions, residual connections, and attention layers.
It also includes a broad class of symmetry groups including $SO(3)$ and all groups studied in this work.
We provide more detail in \S \ref{appendix:parameter_space_decomposition} and refer the interested reader to \cite{nordenfors2025optimization}.

Our scope and contributions here are as follows. Assumptions (i-iii) and the parameter decomposition $\theta = \te + \tep$ do not always hold, but they do hold for EScAIP's force prediction head, enabling us to prove a quadratic relationship between $\Le$ and parameter deviation from the subspace of exactly equivariant functions for the modern graph transformer.
Furthermore, in general settings where the parameter decomposition does hold, \cite{nordenfors2025optimization} studies its implications, such as models following approximately equivariant learning dynamics when $\|\tep\|$ is small. In this general setting, we apply our loss decomposition to derive some relationships between $\|\tep\|$ to $\Le$.



What is the relationship between $\Le$ and $\| \tep \|$ when $\theta = \te + \tep$?
In general for neural networks, $\Le$ is a complex, highly non-linear function of $\theta$.
However, we know that $\Le$ is non-negative, continuous, and equal to zero iff $\tep = 0$.
By these properties, we know that if $\| \tep \|$ is small, then $\Le$ is small.
More formally, for any $\epsilon > 0$, there exists a $\delta > 0$ such that if the parameter deviation is small ($\| \tep \| < \delta)$, then the equivariance error is also small $(\Le < \epsilon)$.

\nop{
    \subsubsection{EScAIP: Relating equivariance error to the deviation from equivariant parameter subspace}
}
The EScAIP architecture is a modern graph transformer architecture that achieved strong results on NNIP energy and force prediction tasks, and satisfies assumptions (i-iii).
The EScAIP architecture uses rotation-invariant features derived from an input molecular graph. 
Its hidden representations for atoms and edges, denoted $\vh$, are rotation-invariant throughout the network.
Force prediction outputs a 3D force vector at each atom in a molecule.
For a single atom with a set of 3D edge vectors $E$ (the vectors pointing from one atom to another atom) in a molecule $\vx$, EScAIP predicts force vectors as:

\begin{equation} \label{eq:escaip-linear-head}
    \begin{bmatrix} o_x \\ o_y \\ o_z \end{bmatrix}
    = \sum_{ e \in E } 
        \begin{bmatrix} e_x \cdot \mathbf{w_x}^\intercal \mathbf{h(e, x)} \\ 
            e_y \cdot \mathbf{w_y}^\intercal \mathbf{h(e, x)} \\ 
            e_z \cdot \mathbf{w_z}^\intercal \mathbf{h(e, x)}
        \end{bmatrix}
\end{equation}

where $\ve \in \mathbb{R}^{3}$ is a 3D edge vector, $\mathbf{h(e, x)} \in \mathbb{R}^h$ is the last hidden representation of the edge $\ve$ in molecule $\vx$, and $\mW = [\mathbf{w_x}, \mathbf{w_y}, \mathbf{w_z}]$, where each $\mathbf{w} \in \mathbb{R}^h$, are the parameters for a linear head with no bias.
The 3D edge vectors $\ve$ are rotation-equivariant with respect to the input molecule, while the hidden representation $\mathbf{h(e)}$ is rotation-invariant to the input molecule, but composing these to form the output prediction generally breaks both invariance and equivariance.

In particular, force predictions are equivariant if and only if the scalar projections of the hidden features are independent of the coordinate axis, i.e., $\mathbf{w_x}^\intercal \mathbf{h(e, x)} = 
            \mathbf{w_y}^\intercal \mathbf{h(e, x)} = 
            \mathbf{w_z}^\intercal \mathbf{h(e, x)}$, 
for all inputs.
Under the mild assumption of a non-degenerate learned embedding function $\mathbf{h(e, x)}$, such that the set of all possible hidden vectors spans the feature space, this condition holds if and only if the parameter vectors themselves are identical: $\mathbf{w_x} = \mathbf{w_y} = \mathbf{w_z}$. This condition defines the subspace $\mathcal{E}$ for the EScaIP architecture. Using this, we decompose
$
    \mW = \we + \wep
$
with an equivariant part $\mathbf{W_\mathcal{E}} = [\bar{\vw}, \bar{\vw}, \bar{\vw}] \in \mathcal{E}$ where $\bar{\vw} = \frac{1}{3}(\mathbf{w_x} + \mathbf{w_y} + \mathbf{w_z})$, and a non-equivariant part $\mathbf{W_\mathcal{E\perp}} = [\vd_x, \vd_y, \vd_z] \in \mathcal{E}\bot$ where $\vd_x = \mathbf{w_x} - \bar{\vw}$, and same for $y, z$.

With this setup, we can now establish that the equivariance error of the EScaIP architecture has a quadratic relationship with the magnitude of the parameter deviation from $\mathcal{E}$, the space of perfectly equivariant functions.

\begin{theorem}\label{prop:deviation-escaip}
    For the EScAIP architecture trained with mean-squared error loss on a non-degenerate dataset,
    for any fixed set of upstream parameters $\theta \setminus \mW$, 
    there exist positive constants $0 <\lambda_\text{min} \leq \lambda_\text{max}$ (which depend on the model architecture, data distribution, and other parameters $\theta \setminus \mW$) such that:

    \begin{equation}
        \lambda_{\text{min}}
            \cdot \| \wep \|_F^2
            \leq
        \Le(\theta ) \leq
            \lambda_{\text{max}}
            \cdot \| \wep \|_F^2
    \end{equation}
\end{theorem}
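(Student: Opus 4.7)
The plan is to show that, after fixing all parameters upstream of the linear head, the twisted deviation $(T^{-1}\circ f \circ T)(x) - \mu(x)$ is linear in $\wep$, so $\Le(\theta)$ is an explicit quadratic form whose sandwich matrix depends only on the data distribution and the frozen upstream parameters. The theorem then reduces to Rayleigh-quotient bounds on the extreme eigenvalues of that form.

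First I would use the decomposition $\mW = \we + \wep$ with $\we = [\bar{\vw}, \bar{\vw}, \bar{\vw}]$ and $\wep = [\vd_x, \vd_y, \vd_z]$ satisfying $\vd_x + \vd_y + \vd_z = 0$. Per atom, equation (\ref{eq:escaip-linear-head}) splits additively into an equivariant piece $\sum_e (\bar{\vw}^\top \vh(e,x))\,\ve$ and a remainder $\sum_e \mathrm{diag}(\vc(e,x))\,\ve$, where $\vc(e,x) = (\vd_x^\top \vh, \vd_y^\top \vh, \vd_z^\top \vh)^\top$ is linear in $\wep$ and satisfies $\mathbf{1}^\top \vc = 0$. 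Twisting and using that $T^{-1}$ is an isometry, the deviation from the equivariant piece becomes $\sum_e T^{-1}\mathrm{diag}(\vc(e,x))T\ve$; the $SO(3)$ twirling identity $\mathbb{E}_T[T^{-1}DT] = \tfrac{1}{3}\mathrm{tr}(D)\,I$ combined with $\mathrm{tr}(\mathrm{diag}(\vc)) = 0$ shows that $\mu(x)$ coincides with the equivariant piece alone, so the full deviation from $\mu(x)$ is the $\wep$-linear remainder.

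Next I would evaluate $\mathbb{E}_T \bigl\|\sum_e T^{-1}\mathrm{diag}(\vc(e,x))T\ve\bigr\|^2$ using the second-moment identity $\mathbb{E}_T[T_{ij}T_{kl}] = \tfrac{1}{3}\delta_{ik}\delta_{jl}$ on $SO(3)$. The cross-edge terms collapse into Euclidean dot products $\ve \cdot \ve'$, producing a per-atom contribution $\tfrac{1}{3}\sum_i \|M_\alpha(x)\vd_i\|^2$, where $M_\alpha(x) \triangleq \sum_{e \in E_\alpha(x)} \ve\,\vh(e,x)^\top \in \mathbb{R}^{3 \times h}$. Summing over atoms and averaging over $x$ yields the closed form
\begin{equation*}
\Le(\theta) \;=\; \frac{1}{3D}\sum_{i\in\{x,y,z\}} \vd_i^\top S\, \vd_i, \qquad S \;\triangleq\; \mathbb{E}_x\!\Bigl[\,\sum_\alpha M_\alpha(x)^\top M_\alpha(x)\Bigr] \;\in\; \mathbb{R}^{h\times h},
\end{equation*}
with $S$ positive semi-definite and depending only on the data and $\theta \setminus \mW$.

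Finally, since $\|\wep\|_F^2 = \sum_i \|\vd_i\|^2$, the Rayleigh quotient yields $\lambda_{\min}(S)\|\wep\|_F^2 \leq \sum_i \vd_i^\top S \vd_i \leq \lambda_{\max}(S)\|\wep\|_F^2$, proving the theorem with $\lambda_{\min} = \lambda_{\min}(S)/(3D)$ and $\lambda_{\max} = \lambda_{\max}(S)/(3D)$. The main obstacle is strict positivity $\lambda_{\min}(S) > 0$: it demands that the column spans of $\{M_\alpha(x)^\top\}_{\alpha,x}$ jointly cover $\mathbb{R}^h$, so no nonzero $\vd_i$ is annihilated by $S$. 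This is the operational meaning of the ``non-degenerate dataset'' hypothesis -- that the edge-weighted hidden features $\ve\,\vh(e,x)^\top$ are not confined to a strict subspace of $\mathbb{R}^{3\times h}$ -- and carefully formalizing this condition while ruling out pathological upstream-parameter configurations that collapse $\vh$ onto a lower-dimensional manifold will be the most delicate part of a rigorous write-up.
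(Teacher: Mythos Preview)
Your proposal is correct and shares the same skeleton as the paper's proof: split the force output into an equivariant piece plus a $\wep$-linear remainder, observe that only the remainder contributes to $\Le$, obtain a quadratic form in $\wep$, and bound it by extreme eigenvalues. The paper, however, stays abstract: it simply notes that $g(T,x,\wep)=T^{-1}\Delta o(Tx;\wep)$ is linear in $\vp=\mathrm{vec}(\wep)$, writes $g=\mM_{T,x}\vp$ for some unspecified $3\times 3h$ matrix, and sets $\bar{\mathcal{Q}}=\mathbb{E}_{x,T}[(\mM_{T,x}-\bar{\mM}_x)^\top(\mM_{T,x}-\bar{\mM}_x)]$ without ever carrying out the Haar integrals. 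Your route is more explicit: using the $SO(3)$ identities $\mathbb{E}_T[T^\top D T]=\tfrac{1}{3}\mathrm{tr}(D)I$ and $\mathbb{E}_T[T_{ij}T_{kl}]=\tfrac{1}{3}\delta_{ik}\delta_{jl}$ together with the constraint $\vd_x+\vd_y+\vd_z=0$, you (i) identify $\mu(x)$ exactly with the equivariant piece, whereas the paper only uses the weaker fact that $o_{eq}$ is constant across twists and hence drops out of the variance, and (ii) produce a closed-form $h\times h$ block $S$ rather than an implicit $3h\times 3h$ matrix. This buys you a concrete, interpretable non-degeneracy condition---$S\succ 0$ iff the edge-weighted features $\{\ve\,\vh(e,x)^\top\}$ span $\mathbb{R}^{3\times h}$---which the paper leaves as a one-line assertion. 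Both arguments are valid; yours is more informative, the paper's is shorter.
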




\begin{proof}
    Provided in \ref{proof:escaip-deviation}.
\end{proof}


\nop{
    \subsubsection{Experiment}
}
\textbf{Experimental investigation.} In $\S$\ref{sec:linear_head_deviation}, we empirically plot the force prediction head's deviation from the mean, vs. equivariance error, and find a Pearson correlation of 0.94 and Spearman correlation of 0.99 over training, which supports our conclusion.

\nop{
    \subsubsection{General Taylor expansion}
}

The preceding analysis can be generalized to a broader class of neural networks.
Applying a Taylor expansion to $\Le(\theta)$ for the neural net $f$ on an input $x$, we have:
$
    f(x; \te + \tep) = f(x; \te) + J_{\tep} f(x; \te )\cdot\tep + \mathcal{O}( \| \tep \|^2)
$
where $J_{\tep} f(x; \te )$ is the Jacobian of the network output with respect to parameter components $\tep$, evaluated at $\te$. 
The key structure, analogous to the EScAIP argument, is the decomposition of the neural net output into a purely equivariant term, and a term linear in $\tep$, as well as a remainder term in this setting.
With this setup, for a broad class of neural network architectures, we can relate locally near $\mathcal{E}$ that $\Le$ is quadratic in $\| \tep \|$ (Thm. \ref{prop:deviation-taylor}), and its grad norm is linear in $\| \tep \|$ (Thm. \ref{proof:deviation-vs-gradnorm-taylor}).

\nop{
    \subsubsection{Theorem: General Taylor }
}
\begin{theorem}\label{prop:deviation-taylor}
    For any neural network whose parameters can be expressed as $\theta = \theta_{\mathcal{E}} + \theta_{\mathcal{E}\perp}$ with $\te \in \mathcal{E}$ and $\tep \in \mathcal{E}\bot$, and for equivariance error $\Le$ defined by the variance of the output with respect to transformations, there exist positive constants $0<\lambda_{min} \leq \lambda_{max}$ such that for a non-degenerate dataset, using $\| \cdot \|$ to denote $L_2$-norm:

    \begin{equation}
    \lambda_{\text{min}} \|\tep\|^2 + \mathcal{O}(\|\tep\|^3)
    \le
    \Le(\theta)
    \le
    \lambda_{\text{max}} \|\tep\|^2 + \mathcal{O}(\|\tep\|^3)
    \end{equation}
\end{theorem}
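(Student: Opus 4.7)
The plan is to Taylor-expand $f(\vx;\theta)$ in $\tep$ around the equivariant anchor $\te$, then plug the expansion into the variance characterization of $\Le$ from Proposition~\ref{prop:mse} and read off the leading-order quadratic form. Concretely, I write $f(\vx;\theta) = f(\vx;\te) + J_{\tep} f(\vx;\te)\,\tep + r(\vx,\tep)$, where $\|r(\vx,\tep)\| = \mathcal{O}(\|\tep\|^2)$ uniformly over the (compact) data domain. The crucial structural fact is that $f(\cdot;\te)$ is perfectly $G$-equivariant since $\te \in \mathcal{E}$, so the twisted evaluation satisfies $(T^{-1}\circ f(\cdot;\te)\circ T)(\vx) = f(\vx;\te)$ for every $T\in G$; the zeroth-order term therefore contributes nothing to any variance over $T$.

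Substituting and using this cancellation, the twisted prediction becomes $(T^{-1}\circ f(\cdot;\theta)\circ T)(\vx) = f(\vx;\te) + K(\vx,T)\,\tep + \mathcal{O}(\|\tep\|^2)$, where $K(\vx,T) := T^{-1}\,J_{\tep} f(T(\vx);\te)$ is a $D\times\dim(\tep)$ sensitivity matrix. Averaging gives $\mu(\vx) = f(\vx;\te) + \bar K(\vx)\,\tep + \mathcal{O}(\|\tep\|^2)$ with $\bar K(\vx) := \mathbb{E}_T[K(\vx,T)]$, so the centered deviation is $(K(\vx,T)-\bar K(\vx))\,\tep$ to leading order. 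Plugging into the variance formula for $\Le$ yields the leading quadratic form
\begin{equation*}
\Le(\theta) = \tep^{\top} M\,\tep + \mathcal{O}(\|\tep\|^3), \quad M := \tfrac{1}{D}\,\mathbb{E}_{\vx,T}\bigl[(K(\vx,T)-\bar K(\vx))^{\top}(K(\vx,T)-\bar K(\vx))\bigr].
\end{equation*}
By construction $M$ is symmetric and positive semi-definite. Restricting $\tep$ to the linear subspace $\mathcal{E}^\perp$ and applying the Rayleigh sandwich to $M|_{\mathcal{E}^\perp}$ then gives the claimed two-sided bound with $\lambda_{\max}$ and $\lambda_{\min}$ equal to the largest and smallest eigenvalues of $M|_{\mathcal{E}^\perp}$. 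The $\mathcal{O}(\|\tep\|^3)$ remainder is controlled by uniform smoothness of $f$ in $\theta$ on a compact neighborhood of $\te$, which is already available under the analyticity assumptions invoked in Proposition~\ref{prop:purity-epsilon-smooth}.

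The main obstacle is establishing the strict lower bound $\lambda_{\min} > 0$, i.e.\ that $M$ has no null directions inside $\mathcal{E}^\perp$. If some $\tep \in \mathcal{E}^\perp\setminus\{0\}$ lay in $\ker(M)$, then $(K(\vx,T)-\bar K(\vx))\,\tep = 0$ for almost every $(\vx,T)$, meaning the first-order perturbation $J_{\tep} f(\vx;\te)\,\tep$ is itself a perfectly equivariant function. Since $\mathcal{E}$ is a linear subspace (its own tangent space), this would force $\tep \in \mathcal{E}\cap\mathcal{E}^\perp = \{0\}$, a contradiction. Making this step rigorous is precisely the role of the "non-degenerate dataset" hypothesis: the data distribution must probe $J_{\tep} f(\cdot;\te)$ in enough directions that only genuinely equivariant parameter directions annihilate the variance. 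I expect identifying $\ker(M|_{\mathcal{E}^\perp})$ with $\{0\}$ under a clean, architecture-independent non-degeneracy condition to be the main technical content, mirroring the role of the analogous non-degeneracy argument already used in the EScAIP proof of Theorem~\ref{prop:deviation-escaip}.
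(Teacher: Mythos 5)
Your proposal is correct and follows essentially the same route as the paper's proof: Taylor-expand around $\te$, use equivariance of $f(\cdot;\te)$ to kill the zeroth-order term, and reduce $\Le$ to the quadratic form $\tep^\top M \tep$ (your $K(\vx,T)-\bar K(\vx)$ is exactly the paper's $\Delta J_{x,T}$ and your $M$ is its $\bar{\mathcal{Q}}$), then apply the Rayleigh bound. Your closing discussion of why $\lambda_{\min}>0$ requires identifying $\ker(M|_{\mathcal{E}^\perp})=\{0\}$ is actually more careful than the paper, which simply asserts positive definiteness from the non-degeneracy hypothesis.
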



\nop{
    \subsubsection{Proof}
}
\begin{proof} Provided in \ref{proof:tep_taylor}.    
\end{proof}

\nop{
    \subsubsection{Taylor: bounding equiv error gradient norm}
}
\begin{theorem}\label{prop:deviation-vs-gradnorm-taylor}
    Under the same conditions as Thm. \ref{prop:deviation-taylor}, the norm of the gradient of the equivariance loss with respect to the non-equivariant parameters is bounded by the deviation itself. Specifically, there exists a constant $C$ such that:
    $$\|\nabla_{\tep} \mathcal{L}_{\text{equiv}}(\theta)\| \le C \cdot \|\tep\|$$
\end{theorem}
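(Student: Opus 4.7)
The plan is to differentiate the Taylor expansion of $\Le(\theta)$ established in Thm.~\ref{prop:deviation-taylor} and exploit the fact that $\tep = 0$ is a global minimum of $\Le$ at which it attains its minimum value of zero. The quadratic upper bound on $\Le$ in $\|\tep\|$ should transfer, via differentiation, to a linear upper bound on $\|\nabla_{\tep}\Le\|$.

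First, I would recall from the proof of Thm.~\ref{prop:deviation-taylor} that substituting the first-order expansion $f(x;\te+\tep) = f(x;\te) + J_{\tep}f(x;\te)\cdot\tep + \mathcal{O}(\|\tep\|^2)$ into the variance-of-output definition of $\Le$ yields
\begin{equation*}
    \Le(\te + \tep) = \tep^\top A(\te)\,\tep + R(\te,\tep),
\end{equation*}
where $A(\te)$ is a positive semi-definite operator built from $\mathbb{E}_{x,T}$-averaged outer products of $J_{\tep} f(x;\te)$ projected onto the orthogonal complement of the twirled mean, with $\|A(\te)\|_{op} \le \lambda_{\max}$, and $R(\te,\tep) = \mathcal{O}(\|\tep\|^3)$. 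The constant term vanishes because $\te \in \mathcal{E}$ implies $\Le(\te)=0$, and the linear-in-$\tep$ term vanishes because $\tep=0$ is a global minimum of the non-negative $\Le$, which forces $\nabla_{\tep}\Le(\te) = 0$.

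Next, I would differentiate both sides with respect to $\tep$, obtaining $\nabla_{\tep}\Le(\te+\tep) = 2A(\te)\tep + \nabla_{\tep}R(\te,\tep)$, and apply the triangle inequality:
\begin{equation*}
    \|\nabla_{\tep}\Le(\te+\tep)\| \le 2\,\|A(\te)\|_{op}\,\|\tep\| + \|\nabla_{\tep}R(\te,\tep)\|.
\end{equation*}
Since $R$ is cubic-order in $\tep$, on any compact neighborhood $U$ of $\te$ its gradient satisfies $\|\nabla_{\tep}R(\te,\tep)\| \le C_R\,\|\tep\|^2$ for some finite $C_R$; using $\|\tep\| \le \mathrm{diam}(U)$ on $U$ gives $C_R\,\mathrm{diam}(U)\,\|\tep\|$, and we obtain the claim with $C = 2\lambda_{\max} + C_R\,\mathrm{diam}(U)$.

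The hard part will be controlling the cubic remainder $R$ uniformly, which requires bounding the third-order $\tep$-derivatives of $f$ on a compact neighborhood; this is available from the analyticity/smoothness assumptions already implicit in Thm.~\ref{prop:deviation-taylor}. A cleaner shortcut, if bookkeeping the cubic expansion becomes cumbersome, is to invoke smoothness of $\Le$ together with the critical-point condition $\nabla_{\tep}\Le(\te)=0$: a mean-value argument on $U$ immediately yields $\|\nabla_{\tep}\Le(\theta)\| \le M\,\|\tep\|$ for some local Lipschitz constant $M$ of $\nabla_{\tep}\Le$, so the entire statement reduces to verifying that $\te$ is a critical point of $\Le$ in the $\tep$-direction, which follows from the minimization property above.
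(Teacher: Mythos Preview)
Your proposal is correct and follows essentially the same approach as the paper: differentiate the quadratic form $\Le(\theta)\approx \tep^{\top}\bar{\mathcal{Q}}\,\tep$ from the proof of Thm.~\ref{prop:deviation-taylor} to obtain $\nabla_{\tep}\Le = 2\bar{\mathcal{Q}}\,\tep$, then bound the norm by $2\lambda_{\max}\|\tep\|$. The paper's proof simply drops the cubic remainder when differentiating (working at the level of the $\approx$), so your explicit handling of $\nabla_{\tep}R$ via a compact-set bound is in fact more careful than the original argument.
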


\begin{proof}
    Provided in \ref{proof:deviation-vs-gradnorm-taylor}.
\end{proof}

\nop{
    \subsection{Summary}
}
\section{Related Work}

Prior work have measured learned equivariance with a wide variety of approaches~\citep{kvinge2022in,NEURIPS2021_076ccd93,geffner2025proteina,qu2024escaip,gruver2023the,nowara2025needequivariantmodelsmolecule,NEURIPS2020_15231a7c}, but to our knowledge, this work is the first to derive a measure of equivariance error that is interpreted as a percent of loss.
Notably, many prior measures effectively estimate equivariance error as a pairwise deviation using only two samples per datapoint, whereas we estimate variance around a mean using enough samples of the twisted prediction as necessary to obtain stable estimates.
\cite{vonessen2025tabascofastsimplifiedmodel} use the variance of the normalized twisted prediction, but this is not interpretable as a percent of loss. They study flow matching, but their metric conflates task difficulty, which gets easier as $t \to 1$, with equivariance error. We correct for this issue, and find that $t = 0.9$ is the most problematic time for non-equivariance, whereas they find $t = 0.5$ instead.
\cite{equivdyna_2024} find that relaxing architectures from exact equivariance improves loss landscape conditioning and achieves better loss than perfectly equivariant architectures on image super-resolution and fluid dynamics modeling.

Twirling serves as a simple yet powerful postprocessing operation to transform any learned function into an equivariant one at test time.
Ideas like this have been explored in \cite{pozdnyakov2023smooth, nordenfors2025optimization}.
Preprocessing inputs to a canonical frame is another simple yet powerful postprocessing operation to convert any function into an equivariant one~\citep{mondal2023equivariant,gandikota2021trainingarchitectureincorporateinvariance}.

\section{Discussion}

In this work, we found that 3D-rotational equivariance is learned easily and quickly. We described a two-phase learning dynamic: initially, model rapidly learn equivariance. This occurs because learning equivariance is an easier task, with a smoother and better-conditioned loss landscape, than the main prediction task. 
After training, the final percent loss from equivariance error is small for all models, but it is notably smaller for EScAIP at 0.006\% than for Prote\'ina and VoxMol (< 5\%).
While all of these loss penalties are small, and easily remedied by test-time postprocessing techniques like twirling or input frame canonicalization, this observation may also motivate research on architecture design to narrow this gap.



Intriguingly, equivariance is learned rapidly despite significant differences in model architectures.
EScAIP is ``nearly equivariant'', as it becomes exactly equivariant with only a small change to its final linear head, yet its initial dip occurs just as quickly as Prote\'ina and VoxMol, which are distant from being architecturally equivariant. It is also interesting that each model's latents learn (or fail to learn) to respect symmetries in different ways.



Our work establishes a principled and unified framework for quantifying equivariance error for non-negative convex losses.
We focused our empirical study on 3D rotations, as this is a physically important symmetry group for biomolecules, but other symmetry groups may be easier or harder to learn.
Looking forward, our framework could be used to study the learning dynamics of equivariance on other symmetry groups. 

\section*{Acknowledgements}

We thank Pan Kessel and Saeed Saremi for helpful discussions.

\section*{Code Availability}

We provide code at https://github.com/genentech/equivariance\_learning.
Our code simply adds callbacks to compute equivariance metrics during training on top of the original EScAIP, Prote\'ina, and VoxMol codebases.




\nop{
    \section*{Bibliography}
}
\bibliography{main}
\bibliographystyle{tmlr}

\newpage
\appendix
\section{Appendix}

\subsection{Parameter space decomposition}\label{appendix:parameter_space_decomposition}

Here, we describe in greater detail \cite{nordenfors2025optimization}'s mathematical framework for analyzing the geometry of neural network parameters in terms of equivariant and non-equivariant parameter subspaces. This framework relies on three assumptions: (i) the symmetry group is compact and acts on finite-dimensional hidden spaces; (ii) the neural net non-linearities are equivariant; and (iii) the loss is invariant.

In practice, conditions (i) and (iii) are satisfied by many settings. Notably, the group $SO(3)$ of 3D rotations is compact, but $SE(3)$ which includes translations is not compact; but this is readily handled by restricting modeling to positionally centered data. Finite-dimensional hidden spaces is easily satisfied by neural networks. Belief that a task is equivariant or invariant implies that the correct loss to use must be invariant. Condition (ii) is more setting-specific; this is satisfied in the VoxMol setting with voxel atomic densities with rotations, and non-linearities that act pixel-wise, but this is not generally satisfied by element-wise non-linearities on linear transformations of 3D coordinates.

In the scope of this manuscript, we use \cite{nordenfors2025optimization}'s framework to study the linear head of the EScAIP architecture, which does satisfy the conditions. 
We also use the framework to extend Proposition 3.14 in \cite{nordenfors2025optimization}, which describes how models follow approximately equivariant learning dynamics when $\tep$ is small.
In our propositions $\ref{prop:deviation-taylor}$, \ref{prop:deviation-vs-gradnorm-taylor}, we extend this with our loss decomposition framework, and derive a relationship between $\tep$ and $\Le$.

The foundation of this framework is the representation of a network's parameters in all of its linear layers as a point in a high-dimensional vector space, denoted $\mathcal{H}$.
This captures the dominant set of learnable parameters when non-linearities are fixed.
The space is formally constructed as the direct sum of the parameter spaces for each individual layer: $\mathcal{H} = \bigoplus_i \text{Hom}(X_i, X_{i+1})$.
Specific network architectures are assumed to have parameters in an affine subspace $\mathcal{L} \subseteq \mathcal{H}$, referred to as the space of "admissible layers".
This setup is shown by construction to be expressive and capable of describing many modern neural network architectures and operations, including fully connected layers, convolutions, residual connections, and attention layers.

To define equivariance for a multi-layer network, the framework supposes that the symmetry group $G$ acts on all input, hidden, and output spaces $(X_0, X_1, ..., X_L)$ through a series of representations, $\rho_i$.
With this setup, the set of all parameter configurations where each linear layer is individually equivariant forms a linear subspace of $\mathcal{H}$, denoted $\mathcal{H}_G$.
This set is a linear subspace because the group actions $\rho_i(g)$ is a linear operator, which means any linear combination of equivariant linear maps remains equivariant.
For instance in the setting of rotations on 3D molecules, consider a linear layer with matrix $A$ with a rotation matrix $R$ -- if it is equivariant, we have $A R x = R A x$. If $A$ and $B$ are both equivariant to $R$, then $C = c_1A + c_2B$ is also equivariant to $R$: $RCx = R(c_1A + c_2B)x = (c_1A + c_2B)Rx = CRx$. $\mathcal{H}_G$ is thus a linear subspace that is closed under addition and scalar multiplication.

Algebraic manipulations show that $TC_i x = C_i Tx$,  using:
\begin{align*}
    TC_i x &= T (c_1A_i + c_2B_i) x \\
    &= c_1 A_iTx + c_2B_i Tx \\
    &= (c_1 A_i + c_2 B_i) Tx \\
    &= C_i Tx
\end{align*}
This subspace's linearity follows from the group's actions being linear transformations.

The parameters that are both architecturally admissible and perfectly equivariant then lie in the intersection of these spaces, $\mathcal{E} = \mathcal{L} \cap \mathcal{H}_G$.
It further follows that if non-linearities are equivariant (condition ii), then the entire neural network function is equivariant when its parameters are in $\mathcal{E}$.

This geometric structure guarantees that any admissible parameters $\theta$ in $\mathcal{L}$ can be uniquely decomposed via orthogonal projection into two components: $\theta = \te + \tep$.
This is possible because $\mathcal{H}$ being an inner product space allows for a unique projection onto the tangent space of the subspace $\mathcal{E}$.
The component $\te$ is the projection of the parameters onto the subspace of equivariant functions ($\mathcal{E}$), while $\tep$ is the component in the orthogonal complement of this subspace, representing deviation from perfect equivariance.
\section{Supplementary Analyses \& Figures}

\subsection{Sensitivity of percent loss from equivariance error to number of rotations used for estimation}
\label{sec:sensitivity_from_num_rotations}

Here, we plot bootstrapped estimates of standard error of percent loss from equivariance error, for a model with 4 percent loss from equivariance error, with varying number of rotations used to estimate the statistic. The model is a 60M Proteina model early in training at 500 steps, evaluated at $t=0.5$. We compute loss metrics across 100 randomly sampled 3D rotations, and use this metrics set to derive subsampled bootstrap statistics for lower number of rotations. We find that the Proteina model is smooth enough, even early in training, over the group of 3D rotations. Our practical choice of estimating percent loss from equivariance error with 10 rotations has low standard error around 0.0015 compared to the measured percent loss from equivariance error of 0.04.

\begin{figure}[H]
    \centering
    \includegraphics[width=0.6\linewidth]{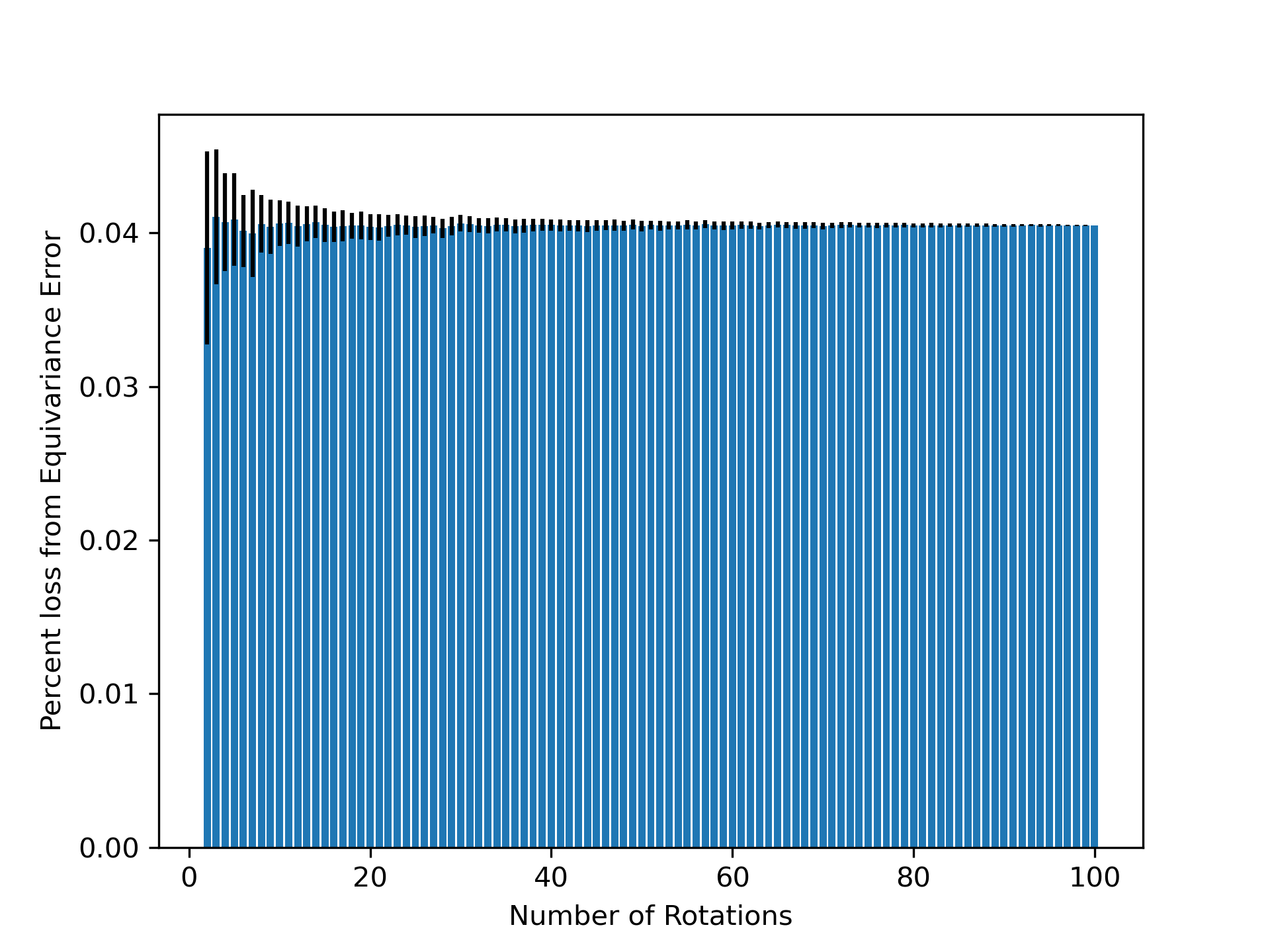}
    \caption{Bootstrapped estimate of standard error of percent loss from equivariance error, for a model with 4 percent loss from equivariance error.}
\end{figure}

\subsection{Validation loss curves for EScAIP, by training set size}

Here, we plot validation loss curves for EScAIP, split into total MSE loss, group-averaged loss, and equivariance error, for models trained on different training set sizes.
These results highlight that across a variety of training set sizes, group-averaged loss dominates the total loss. At smaller training set sizes, equivariance error rises towards the end of training, suggesting some type of overfitting effect, while at larger training set sizes, equivariance error continues to shrink.

\begin{figure}[H]
    \centering
    \includegraphics[width=0.6\linewidth]{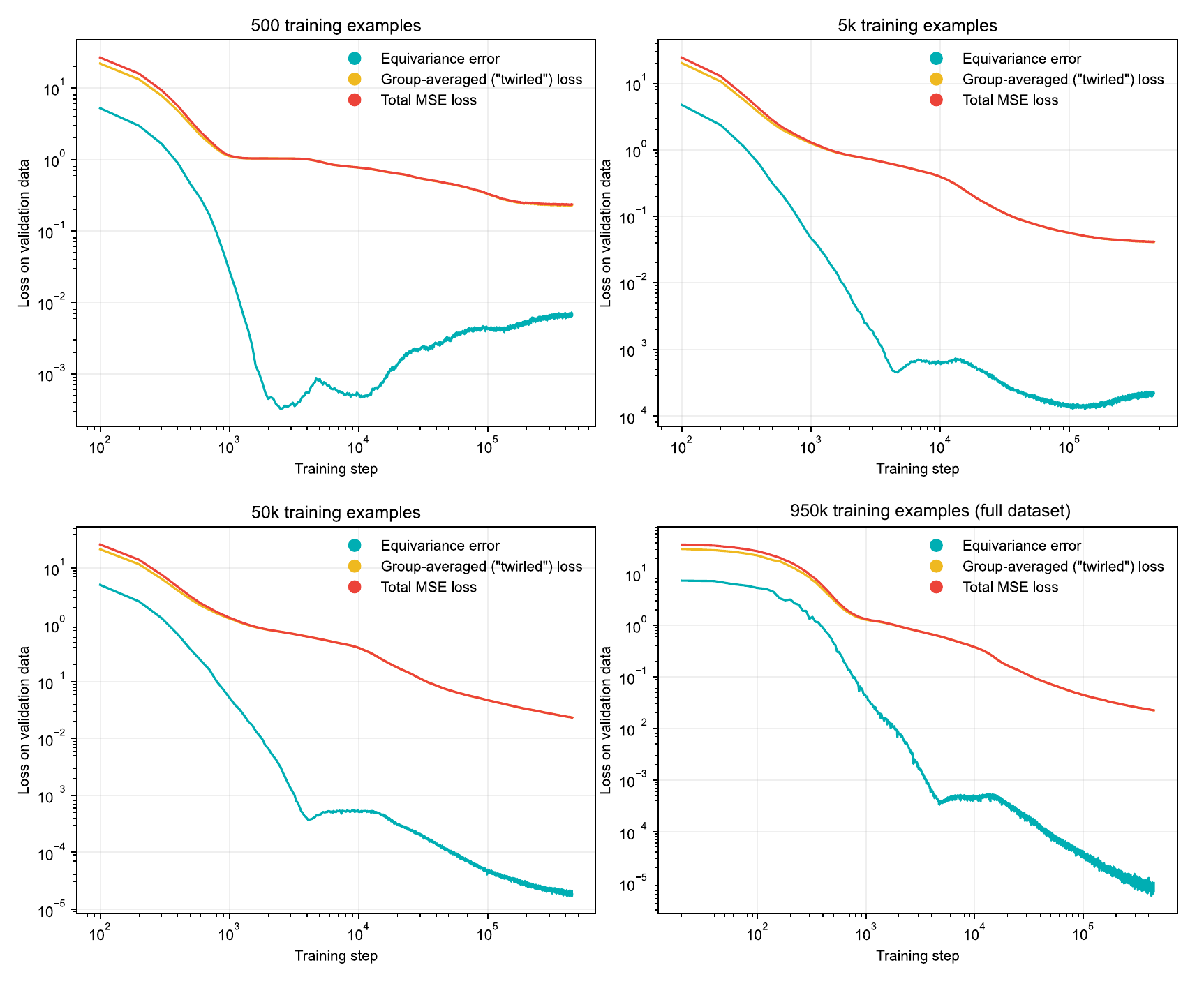}
    \caption{EScAIP: Validation loss curves over training, varied by training set size.}
    \label{fig:supplement-escaip-vary-dataset-loss-curves}
\end{figure}

\subsection{Gradient norm ratios vs. loss ratios}\label{sec:grad_norm_ratio_plots}

Here, we plot the loss ratio (percent loss from equivariance error) to the gradient norm ratio: the ratio of the gradient norm of the equivariance error, to the gradient norm of the group-averaged loss, over training.
In general, gradient norms have a complex relationship with loss norms. 
In propositions \ref{prop:purity-epsilon-smooth} and \ref{prop:purity-KL}, we show that under certain local smoothness assumptions, the gradient norm ratio can be controlled by the loss ratio; however, this smoothness assumption may not apply in practice.
With these empirical plots, we find a generally strong correlation between loss ratio and gradient norm ratio.

\begin{figure}[H]
    \centering
    \includegraphics[width=0.6\linewidth]{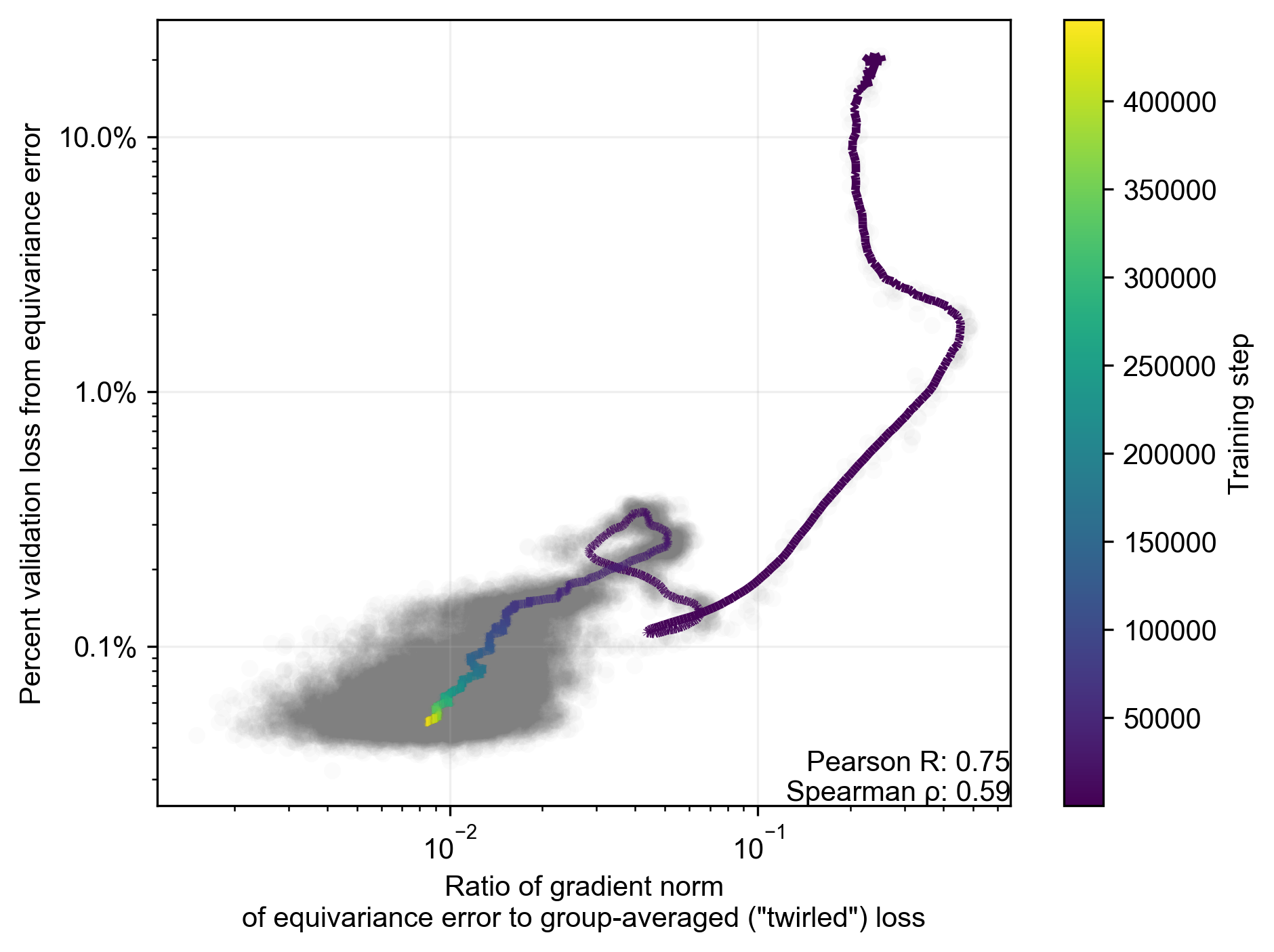}
    \caption{EScAIP: Percent validation loss from equivariance error vs. grad norm ratio, over training. Colored line indicates smoothed exponential moving average, colored by training step. }
    \label{fig:supplement-escaip-grad-norm}
\end{figure}

\begin{figure}[H]
    \centering
    \includegraphics[width=1\linewidth]{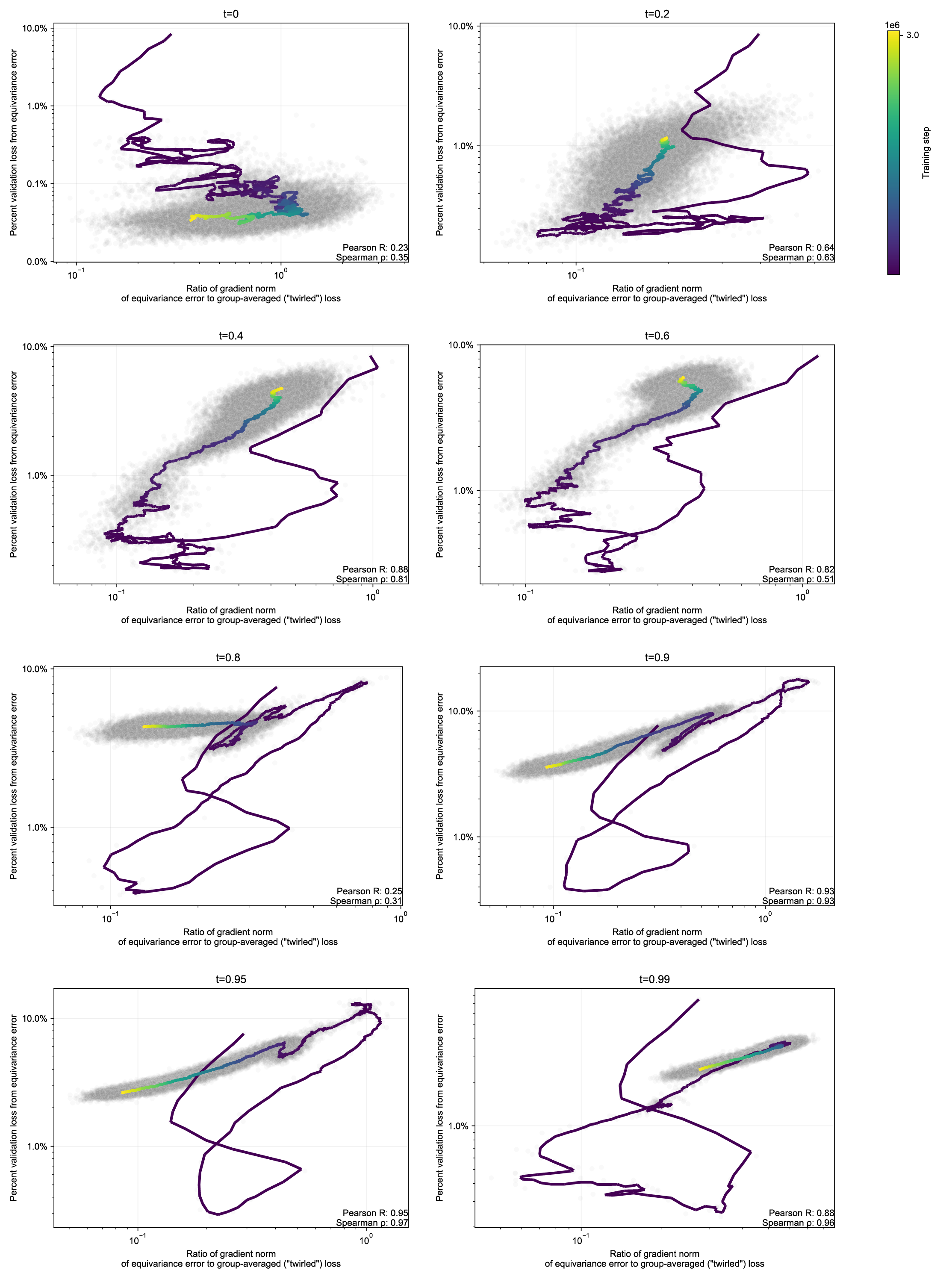}
    \caption{Prote\'ina: Percent validation loss from equivariance error vs. grad norm ratio, over training, by flow matching time. Colored line indicates smoothed exponential moving average, colored by training step. }
    \label{fig:supplement-proteina-grad-norm}
\end{figure}

\subsection{Variance of EScAIP's linear force head controls equivariance error}
\label{sec:linear_head_deviation}

Here, we provide plots of the variance, or deviation from the mean, of EScAIP's linear force head, over training time, and compared to percent loss from equivariance error. Our proposition \ref{prop:deviation-escaip} states that equivariance error is controlled quadratically by the linear head's deviation from its mean. Empirically, we find strong agreement between the two metrics, with Pearson correlation = 0.94, and Spearman correlation = 0.99 over training.

\begin{figure}[H]
    \centering
    \includegraphics[width=0.5\linewidth]{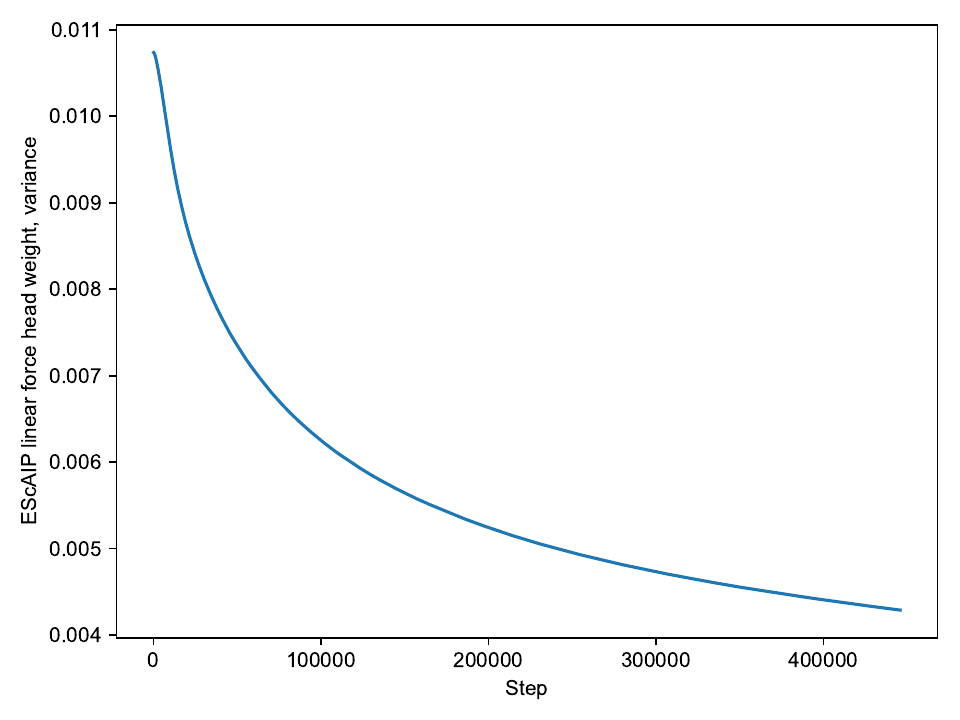}
    \caption{EScAIP: Variance of linear force head weights, by training time. }
\end{figure}

\begin{figure}[H]
    \centering
    \includegraphics[width=0.5\linewidth]{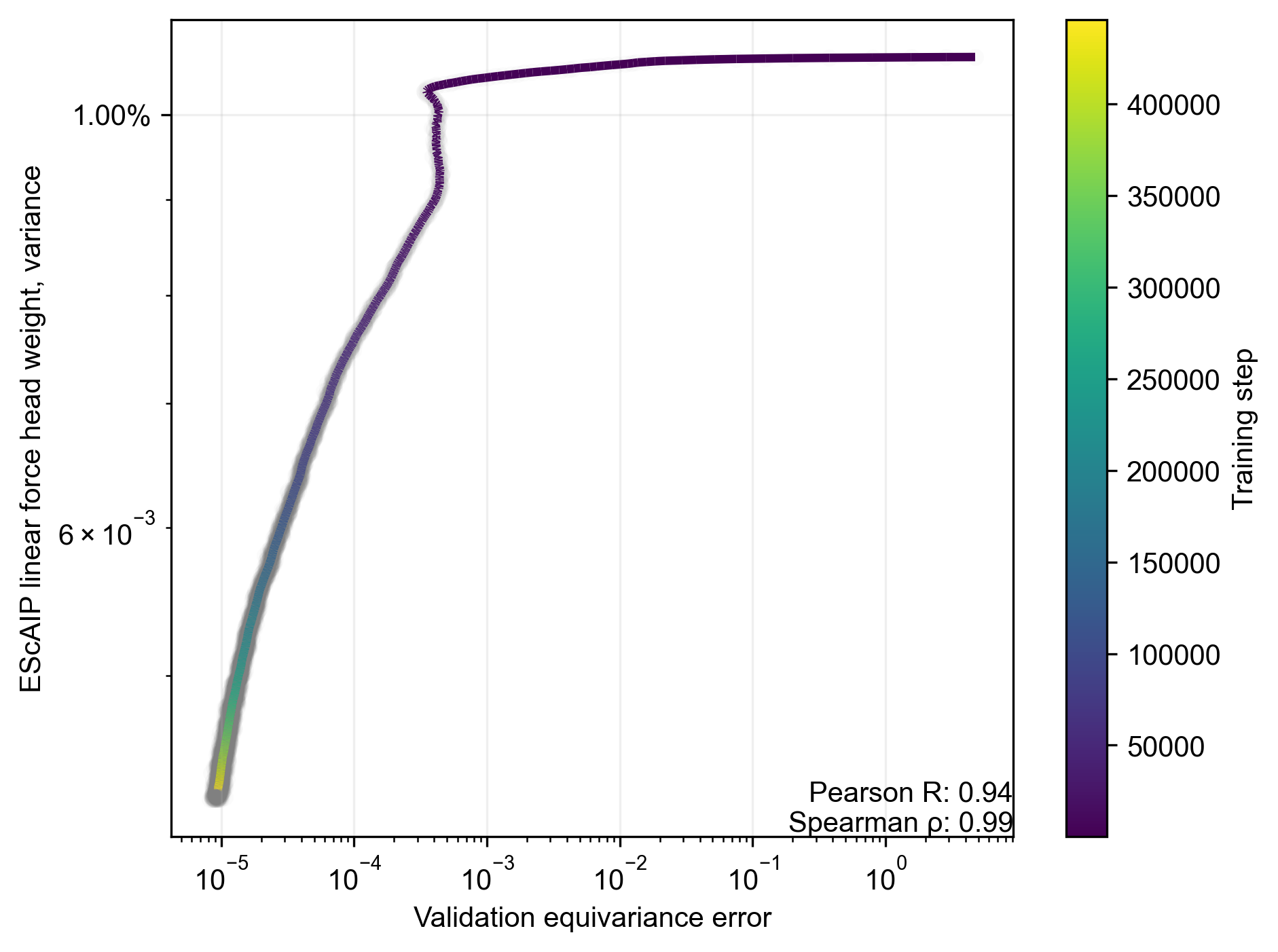}
    \caption{EScAIP: Variance of linear force head weights, vs. percent validation loss from equivariance error, over training. }
\end{figure}

\section{Proofs}

\subsection{Proof of Proposition \ref{prop:mse}}

\begin{proposition*}
    If $l(z,y) = \frac{1}{D} \|z-y \|^2 $ is mean-squared error, then the total loss decomposes as:

    \begin{equation}
        \mathcal{L}(f) = \underbrace{
            \mathbb{E}_{x,y} [ l( \mu(x), y )]
        }_{\text{prediction error}} +
        \underbrace{
            \frac{1}{D} \mathbb{E}_{x,y} \left[
                \sum_{i=1}^D
                    \text{Var}_T[ (T^{-1} \circ f \circ T)(x)_i ]
            \right]
        }_{\text{equivariance error}}
    \end{equation}
    
\end{proposition*}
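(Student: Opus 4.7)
The plan is to derive this as a straightforward instance of the bias--variance decomposition, using the defining property of $\mu(x)$ as the $T$-expectation of the twisted prediction. The fact that the loss is MSE makes the Taylor expansion given in the main text exact (the Hessian is constant and higher-order terms vanish), so no approximation error needs to be tracked.

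Concretely, I would start from the definition
\begin{equation*}
\mathcal{L}(f) = \mathbb{E}_{x,y,T}\!\left[\tfrac{1}{D} \| (T^{-1} \circ f \circ T)(x) - y \|^2 \right],
\end{equation*}
abbreviate $Z_x(T) = (T^{-1} \circ f \circ T)(x)$, and add and subtract $\mu(x)$ inside the norm:
\begin{equation*}
Z_x(T) - y = \bigl(Z_x(T) - \mu(x)\bigr) + \bigl(\mu(x) - y\bigr).
\end{equation*}
Expanding the squared norm produces three pieces: $\|Z_x(T) - \mu(x)\|^2$, $\|\mu(x) - y\|^2$, and the cross term $2\langle Z_x(T) - \mu(x), \mu(x) - y \rangle$.

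Next I would take the expectation over $T$ while conditioning on $x,y$. Since $\mu(x) - y$ does not depend on $T$, the cross term becomes $2\langle \mathbb{E}_T[Z_x(T)] - \mu(x),\, \mu(x)-y\rangle$, which vanishes exactly by the definition $\mu(x) \triangleq \mathbb{E}_T[Z_x(T)]$. The first piece, in expectation over $T$, equals $\sum_{i=1}^D \mathrm{Var}_T[Z_x(T)_i]$ by the scalar identity $\mathbb{E}[(U-\mathbb{E}U)^2] = \mathrm{Var}(U)$ applied coordinatewise. The second piece is $l(\mu(x), y)$ after dividing by $D$.

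Finally, taking the outer expectation over $(x,y)$ and recombining gives the claimed decomposition. The only mildly delicate point is justifying the interchange of $\mathbb{E}_T$ with the coordinatewise sum/square (Fubini), which is immediate because $G$ is compact and $f$ is continuous so everything is bounded on the support; there is no real obstacle in the argument.
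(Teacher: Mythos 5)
Your proof is correct and takes essentially the same route as the paper: both are the bias--variance decomposition of the MSE around the twirled mean $\mu(x)$, with the cross term annihilated by the defining identity $\mu(x)=\mathbb{E}_T[Z_x(T)]$. The only difference is presentational --- the paper gets the result by specializing its general Taylor expansion (constant Hessian $\tfrac{2}{D}I$, vanishing higher-order terms), whereas you carry out the expansion of the square directly, which makes your version self-contained but proves the same thing.
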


\begin{proof}
For mean-squared error, the Hessian is constant: $H_l(z,y) = \frac{2}{D}I$ where $I$ is the $D \times D$ identity matrix. Furthermore, higher-order derivatives are zero, so the decomposition has no additional terms. The equivariance error simplifies as:

\begin{equation}
    \frac{1}{2} \mathbb{E}_{x,y} \left[ \text{tr} \left( \left(\frac{2}{D}I\right) \text{Cov}_T[\dots] \right) \right] = \frac{1}{D} \mathbb{E}_{x,y} \left[ \text{tr} \left( \text{Cov}_T[\dots] \right) \right]    
\end{equation}

\end{proof}

\subsection{Lemma: Analytic functions are smooth on compact sets}
\begin{lemma}\label{lemma:analytic_smooth}
    Let $f: \mathcal{X} \to \mathbb{R}$ be a real-analytic function. Then, on any compact subset $X \subset \mathcal{X}$, there exists a finite constant $M > 0$ such that $f$ is $M$-smooth on $X$; that is,

    \begin{equation}
        \frac{1}{2} \| \nabla f(x) \|^2 \leq M |f(x) - f(x^*)|
    \end{equation}

    for some minimum $x^* \in X$.
\end{lemma}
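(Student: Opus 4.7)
The plan is to reduce the claim to standard $L$-smoothness (Lipschitz continuous gradient) of $f$ on a compact set containing $X$, and then derive the stated inequality from the descent lemma. The key observation is that real-analyticity furnishes $C^\infty$ regularity, so a continuous Hessian on a compact set is uniformly bounded, which is precisely what is needed for $L$-smoothness.

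First, I would invoke the standard real analysis fact that a real-analytic function is $C^\infty$ on its domain of analyticity, so in particular the Hessian $\nabla^2 f$ is continuous on $\mathcal{X}$. Since $X \subset \mathcal{X}$ is compact and $\mathcal{X}$ is open, I can enlarge $X$ to a slightly larger compact convex neighborhood $X' \subset \mathcal{X}$ by covering $X$ with open Euclidean balls contained in $\mathcal{X}$ and passing to a finite subcover (then taking a convex hull small enough to remain in $\mathcal{X}$). On $X'$, continuity yields the finite constant $L = \sup_{x \in X'} \|\nabla^2 f(x)\|_{\mathrm{op}} < \infty$. Convexity of $X'$ together with the mean value theorem applied to $\nabla f$ then gives $\|\nabla f(x) - \nabla f(x')\| \leq L\|x - x'\|$ for all $x, x' \in X'$, i.e., $L$-Lipschitz gradient on $X'$.

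With $L$-smoothness in hand, the standard descent lemma gives $f(y) \leq f(x) + \nabla f(x)^\top (y - x) + \tfrac{L}{2}\|y - x\|^2$ for $x, y \in X'$. Setting $y = x - \tfrac{1}{L}\nabla f(x)$, which lies in $X'$ provided $X'$ is chosen with a sufficient margin around $X$, yields $f(y) \leq f(x) - \tfrac{1}{2L}\|\nabla f(x)\|^2$. Taking $x^*$ to be a minimizer of $f$ on $X'$, we have $f(x^*) \leq f(y)$, which rearranges to $\tfrac{1}{2}\|\nabla f(x)\|^2 \leq L\bigl(f(x) - f(x^*)\bigr)$, the claim with $M = L$.

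The main obstacle is the technical bookkeeping around the descent step: the point $y = x - \tfrac{1}{L}\nabla f(x)$ must lie in the region where $L$-smoothness holds, and the minimizer $x^*$ must be a minimizer of $f$ on that same region, which is why the proof must operate on an enlarged neighborhood $X'$ rather than on $X$ itself. Everything else (analytic implies $C^\infty$, continuous on compact attains its supremum, mean value inequality, descent lemma) is standard textbook material and introduces no substantive difficulty.
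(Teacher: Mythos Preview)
Your proposal is correct and follows essentially the same route as the paper: real-analytic $\Rightarrow$ $C^\infty$ $\Rightarrow$ continuous Hessian $\Rightarrow$ bounded Hessian on a compact set $\Rightarrow$ Lipschitz gradient $\Rightarrow$ the stated inequality via the descent lemma. The paper's own proof is terser---it simply cites ``integrating the gradient-Lipschitz property along the line segment between $x$ and $x^*$, or by standard smoothness results'' for the final step---whereas you are more explicit (and more careful) in invoking the descent lemma at the optimal step $y = x - \tfrac{1}{L}\nabla f(x)$ and in flagging the need to work on an enlarged convex neighborhood $X'$ so that both the line segment and the descent step remain in the region where the Hessian bound holds.
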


\begin{proof}
    Since $f$ is analytic, all derivatives of $f$ exist and are continuous. In particular, its Hessian $\nabla^2 f(x)$ is continuous on $\mathcal{X}$. 
    By the extreme value theorem, any continuous function on a compact set attains a finite supremum. Therefore,

    \begin{equation}
        M \triangleq \sup_{x\in X} \| \nabla^2 f(x) \|_2 < \infty.
    \end{equation}

    This boundedness of the Hessian implies that $\nabla f$ is Lipschitz continuous with constant $M$ on $X$, which is the $M$-smoothness condition.
    The inequality $\frac{1}{2} \| \nabla f(x) \|^2 \leq M \|f(x) - f(x')\|$ follows from integrating the gradient-Lipschitz property along the line segment between $x$ and $x^*$, or by standard smoothness results.
\end{proof}

\subsection{Proof of Proposition \ref{prop:purity-epsilon-smooth}}
\begin{proposition*}
    Let the model $f_\theta$ be analytic (i.e., a deep neural network constructed from analytic activation functions), and suppose the analytic loss satisfies $\mathcal{L}(\theta) = \Lm(\theta) + \Le(\theta)$ (i.e., for convex losses via \ref{eq:convex_le_def}).
    Then, for any compact subset $U$ in the parameter space of $f_\theta$ where $\epsilon(\theta)$ is well-defined (the denominator is non-zero) and that includes an optima $\theta^*$ with $\epsilon(\theta^*) = 0$, there exists a finite constant $M_\epsilon(U) = \sup_{\theta \in U} \| \nabla^2 \epsilon(\theta) \|$ such that the approximation $\nabla \mathcal{L}(\theta) \approx \nabla \Lm(\theta)$ holds for all $\theta \in U$ with relative error bounded by:

    \begin{equation}
            \frac{ \| \nabla \mathcal{L}(\theta) - \nabla \Lm(\theta) \|
        }{ \| \nabla \Lm(\theta) \| } \leq \epsilon(\theta) + \frac{
            \Lm(\theta)
        }{ \| \nabla \Lm(\theta) \| }
        \sqrt{2 M_\epsilon(U) \epsilon(\theta)} 
    \end{equation}
\end{proposition*}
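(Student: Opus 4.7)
The plan is to reduce the bound on the gradient residual to a bound on $\|\nabla \epsilon(\theta)\|$, and then apply Lemma~\ref{lemma:analytic_smooth} to $\epsilon$ itself on the compact set $U$.

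First, since $\mathcal{L}(\theta) = \Lm(\theta) + \Le(\theta)$ on $U$, linearity gives $\nabla \mathcal{L}(\theta) - \nabla \Lm(\theta) = \nabla \Le(\theta)$, so the numerator on the left-hand side is simply $\|\nabla \Le(\theta)\|$. Next, using $\Le(\theta) = \epsilon(\theta)\,\Lm(\theta)$ (valid wherever $\Lm(\theta) \neq 0$, which holds on $U$ by assumption), the product rule yields
\begin{equation}
\nabla \Le(\theta) \;=\; \epsilon(\theta)\,\nabla \Lm(\theta) \;+\; \Lm(\theta)\,\nabla \epsilon(\theta).
\end{equation}
Applying the triangle inequality and dividing by $\|\nabla \Lm(\theta)\|$ gives
\begin{equation}
\frac{\|\nabla \mathcal{L}(\theta) - \nabla \Lm(\theta)\|}{\|\nabla \Lm(\theta)\|}
\;\leq\; \epsilon(\theta) \;+\; \frac{\Lm(\theta)}{\|\nabla \Lm(\theta)\|}\,\|\nabla \epsilon(\theta)\|.
\end{equation}

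It remains to bound $\|\nabla \epsilon(\theta)\|$ by $\sqrt{2 M_\epsilon(U)\,\epsilon(\theta)}$. Here I use that $\epsilon$ is a ratio of analytic functions with non-vanishing denominator on $U$, hence itself analytic on $U$; by Lemma~\ref{lemma:analytic_smooth} applied to $\epsilon$, the quantity $M_\epsilon(U) = \sup_{\theta \in U}\|\nabla^2 \epsilon(\theta)\|$ is finite, so $\epsilon$ is $M_\epsilon(U)$-smooth on $U$. The key observation is that $\epsilon$ is non-negative on $U$ (since $\Le \geq 0$ for convex losses via \eqref{eq:convex_le_def} and $\Lm > 0$) and vanishes at $\theta^* \in U$, so $\theta^*$ is a global minimum of $\epsilon$ on $U$. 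The standard smoothness inequality from the lemma then reads $\tfrac{1}{2}\|\nabla \epsilon(\theta)\|^2 \leq M_\epsilon(U)\,|\epsilon(\theta) - \epsilon(\theta^*)| = M_\epsilon(U)\,\epsilon(\theta)$, giving the desired bound on $\|\nabla \epsilon(\theta)\|$. Substituting yields the claimed inequality.

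The main subtlety, rather than an obstacle, is verifying the hypotheses for invoking Lemma~\ref{lemma:analytic_smooth} on $\epsilon$ rather than on $\Le$ or $\Lm$ directly: one needs $\epsilon$ to be analytic on $U$ (which follows from analyticity of $\Lm, \Le$ plus $\Lm \neq 0$ on $U$), non-negative, and minimized inside $U$. These are exactly the conditions built into the proposition statement. A minor check is that the smoothness bound from the lemma applies to gradient magnitude at arbitrary points in $U$ using the in-set minimizer $\theta^*$, which is precisely the form stated in Lemma~\ref{lemma:analytic_smooth}. No further technical machinery is required.
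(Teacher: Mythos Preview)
Your proposal is correct and follows essentially the same route as the paper: rewrite the gradient residual via the product rule as $\epsilon(\theta)\nabla\Lm(\theta)+\Lm(\theta)\nabla\epsilon(\theta)$, apply the triangle inequality, and then bound $\|\nabla\epsilon(\theta)\|$ using the smoothness of $\epsilon$ on $U$ together with $\epsilon(\theta^*)=0$. If anything, your write-up is slightly more careful than the paper's in explicitly checking that $\epsilon$ inherits analyticity (as a ratio with non-vanishing denominator) and that $\theta^*$ is a minimizer of $\epsilon$ on $U$, which are exactly the hypotheses needed to invoke Lemma~\ref{lemma:analytic_smooth}.
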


\begin{proof}
    \label{proof:epsilon-smooth}
    Using $\mathcal{L}(\theta) = \Lm(\theta) + \Le(\theta)$, the total loss is $\mathcal{L}(\theta) = (1 + \epsilon(\theta)) \Lm(\theta)$. Differentiate with the product rule:

    \begin{align}
        \nabla \mathcal{L}(\theta) &= \nabla [ (1 + \epsilon(\theta)) \Lm(\theta) ] \\
        &= \nabla \epsilon(\theta) \Lm(\theta) + (1 + \epsilon(\theta))\nabla \Lm(\theta) \\
        \nabla \mathcal{L}(\theta) - \nabla \Lm(\theta) &= \epsilon(\theta) \nabla \Lm(\theta) + \Lm(\theta) \nabla \epsilon(\theta)
    \end{align}
    
    Now, we bound the norm of this difference using the triangle inequality:
    
    \begin{equation}
        \| \nabla \mathcal{L}(\theta) - \nabla \Lm(\theta) \| \leq \epsilon(\theta) \| \nabla \Lm(\theta) \| + \Lm(\theta) \| \nabla \epsilon(\theta) \|
    \end{equation}
    
    The model and losses are analytic, so by lemma \ref{lemma:analytic_smooth}, there exists a finite $M_\epsilon(U)$ such that $\epsilon(\theta)$ is $M_\epsilon(U)$-smooth over the compact parameter region $U$.
    Specifically, $M_\epsilon(U) = \sup_{\theta \in U} \| \nabla^2 \epsilon(\theta) \|$.
    Applying the smoothness property $ \| \nabla\epsilon(\theta) \| \leq \sqrt{2 M_\epsilon(U) \epsilon(\theta)}$ relative to the optima $\theta^*$ with $\epsilon(\theta^*) = 0$, we obtain the final result:
    
    \begin{equation}
        \frac{ \| \nabla \mathcal{L}(\theta) - \nabla \Lm(\theta) \|
            }{ \| \nabla \Lm(\theta) \| } \leq \epsilon(\theta) + \frac{
                \Lm(\theta)
            }{ \| \nabla \Lm(\theta) \| }
            \sqrt{2 M_\epsilon(U) \epsilon(\theta)} 
    \end{equation}
\end{proof}

\subsection{Proof of Proposition \ref{prop:purity-KL}}
\begin{proposition*}
    Let the model $f_\theta$ be analytic (i.e., a deep neural network constructed from analytic activation functions), and suppose $\mathcal{L}(\theta) = \Lm(\theta) + \Le(\theta)$ (i.e., for convex losses via \ref{eq:convex_le_def}) is analytic and has a minimum at 0.
    Then, there exists a compact neighborhood $U$ around the global minimum $\theta^*$ and finite constants $c >0, M_{\Le(\theta)}(U), \alpha \in [1, 2)$ such that for all $\theta \in U$, the approximation $\nabla \mathcal{L}(\theta) \approx \nabla \Lm(\theta)$ holds with relative error bounded by:

    \begin{equation}
        \frac{ \| \nabla \mathcal{L}(\theta) - \nabla \Lm(\theta) \| }{ \| \nabla \Lm(\theta) \| } \leq \sqrt{\frac{2M_{\Le(\theta)}(U)}{c}} \cdot \sqrt{ \frac{\Le(\theta)}{ \Lm(\theta)^\alpha } }
    \end{equation}
\end{proposition*}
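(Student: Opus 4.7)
\textbf{Proof plan for Proposition \ref{prop:purity-KL}.}

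The plan is to use essentially the same algebraic starting point as Proposition \ref{prop:purity-epsilon-smooth}, but replace the smoothness-based lower bound with a KŁ-based lower bound on the denominator. First I would use the additive decomposition $\mathcal{L}(\theta) = \Lm(\theta) + \Le(\theta)$ directly (rather than writing $\mathcal{L} = (1+\epsilon)\Lm$) to get the clean identity $\nabla \mathcal{L}(\theta) - \nabla \Lm(\theta) = \nabla \Le(\theta)$, so that the relative error we need to control is exactly $\|\nabla \Le(\theta)\| / \|\nabla \Lm(\theta)\|$.

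Next I would bound the numerator from above. Since $\mathcal{L}$ and $\Le$ are analytic and non-negative, and since $\mathcal{L}(\theta^*) = 0$ with $\Lm, \Le \geq 0$ forces $\Le(\theta^*) = 0$, on a compact neighborhood $U$ of $\theta^*$ Lemma \ref{lemma:analytic_smooth} yields a finite $M_{\Le}(U) = \sup_{\theta \in U} \|\nabla^2 \Le(\theta)\|$ and hence the standard smoothness-to-value inequality $\|\nabla \Le(\theta)\|^2 \leq 2 M_{\Le}(U)\, \Le(\theta)$. For the denominator, I would note that $\Lm = \mathcal{L} - \Le$ is also analytic with a zero minimum at $\theta^*$, and invoke the Kurdyka-Łojasiewicz inequality cited in the text: there is a (possibly smaller) compact neighborhood $U$, constants $c > 0$ and $\alpha \in [1,2)$ such that $\|\nabla \Lm(\theta)\|^2 \geq c\, \Lm(\theta)^\alpha$ for all $\theta \in U$, which rearranges to $\|\nabla \Lm(\theta)\| \geq \sqrt{c}\, \Lm(\theta)^{\alpha/2}$. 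Taking the intersection of the two neighborhoods (still compact, still containing $\theta^*$) lets both bounds hold simultaneously.

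Finally I would combine the two bounds:
\begin{equation}
\frac{\|\nabla \mathcal{L}(\theta) - \nabla \Lm(\theta)\|}{\|\nabla \Lm(\theta)\|}
= \frac{\|\nabla \Le(\theta)\|}{\|\nabla \Lm(\theta)\|}
\leq \frac{\sqrt{2 M_{\Le}(U)\, \Le(\theta)}}{\sqrt{c}\, \Lm(\theta)^{\alpha/2}}
= \sqrt{\frac{2 M_{\Le}(U)}{c}} \cdot \sqrt{\frac{\Le(\theta)}{\Lm(\theta)^\alpha}},
\end{equation}
which is the desired bound.

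The main obstacle is the technical bookkeeping around KŁ: the inequality is only guaranteed locally around a critical point, so I must verify that $\theta^*$ is a critical point of $\Lm$ (it is, since $\Lm \geq 0$ attains 0 there) and that the neighborhoods for (i) analytic smoothness of $\Le$ and (ii) the KŁ property of $\Lm$ can be chosen to coincide. A minor subtlety is that applying the squared-gradient form of smoothness $\|\nabla g\|^2 \leq 2M(g - g(\theta^*))$ requires $g = \Le$ to be non-negative with $g(\theta^*) = 0$, which holds here since $\mathcal{L}(\theta^*) = 0$ forces both $\Lm(\theta^*) = 0$ and $\Le(\theta^*) = 0$. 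Beyond this, every step is a direct invocation of results already stated in the excerpt.
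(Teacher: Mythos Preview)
Your proposal is correct and follows essentially the same route as the paper: identify $\nabla\mathcal{L}-\nabla\Lm=\nabla\Le$, upper-bound the numerator via smoothness of $\Le$ (Lemma~\ref{lemma:analytic_smooth}), lower-bound the denominator via the KŁ inequality applied to $\Lm$ at $\theta^*$, and divide. Your additional bookkeeping (that $\Le(\theta^*)=0$ follows from $\mathcal{L}(\theta^*)=0$ and non-negativity, and that the two neighborhoods can be intersected) is more careful than the paper's terse ``follows algebraically,'' but the argument is the same.
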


\begin{proof}\label{proof:purity-KL}
    By \cite{Kurdyka1998,dereichkassing}, any real-analytic function $f$ satisfies the Kurdyka-Łojasiewicz inequality, which states that there exists a compact local neighborhood $U$ around any critical point $\theta^*$ and constants $c>0, \alpha \in [1,2)$ such that, for all $\theta \in U$:

    \begin{equation}
        \| \nabla f(\theta) \|^2 \geq c | f(\theta) - f(\theta^*)|^\alpha
    \end{equation}
    
    When applied to the real-analytic function $\Lm$ at the global minimum $\theta^*$ with $\Lm(\theta^*) = 0$, the Kurdyka-Łojasiewicz inequality states that there exists a neighborhood $U$ around $\theta^*$ and constants $c>0$ and $\alpha \in [1,2)$ such that:

    \begin{equation}\label{eq:_kl_inequality}
        \| \nabla \mathcal{L}_\text{mean}(\theta)\|^2 \geq c \cdot \mathcal{L}_\text{mean}(\theta)^\alpha
    \end{equation}

    Finally, by analyticity of the network and loss functions and applying lemma \ref{lemma:analytic_smooth} relative to the optima $\theta^*$ with $\Le(\theta^*) = 0$, there exists a finite $M_{\Le(\theta)}(U)$ such that $\Le$ is $M_{\Le(\theta)}(U)$-smooth on the compact neighborhood $U$:

    \begin{equation}\label{eq:_smooth}
         \| \nabla \mathcal{L}_\text{equiv}(\theta)\|^2 \leq 2 M_{\Le(\theta)}(U) \cdot \mathcal{L}_\text{equiv}(\theta)
    \end{equation}
    
    The final result follows algebraically by combining inequalities \ref{eq:_kl_inequality} and \ref{eq:_smooth}.
\end{proof}


\subsection{Proof of Proposition \ref{prop:deviation-escaip}}\label{proof:escaip-deviation}
\begin{theorem*}
    For the EScAIP architecture trained with mean-squared error loss on a non-degenerate dataset,
    for any fixed set of upstream parameters $\theta \setminus \mW$, 
    there exist positive constants $0 <\lambda_\text{min} \leq \lambda_\text{max}$ such that:

    \begin{equation}
        \lambda_{\text{min}}
            \cdot \| \wep \|_F^2
            \leq
        \Le(\theta ) \leq
            \lambda_{\text{max}}
            \cdot \| \wep \|_F^2
    \end{equation}
\end{theorem*}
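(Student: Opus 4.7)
My plan is to explicitly carry out the twirling integral for EScAIP's force head, reducing $\Le$ to a quadratic form in $\wep$, and then extract the theorem via Rayleigh quotients against a data-dependent PSD matrix.

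I would first split the prediction as $f(x;\mW) = f_{\we}(x) + f_{\wep}(x)$. The equivariant piece $\sum_e \ve \cdot (\bar{\vw}^\intercal \vh(e,x))$ is fixed under twirling because $\ve$ is rotation-equivariant while the scalar $\bar{\vw}^\intercal \vh(e,x)$ is rotation-invariant; hence all $T$-dependence lives in the residual $q(x,T) := T^{-1} f_{\wep}(Tx)$, whose $i$-th component (using invariance of $\vh(e,x)$ under rotating the input molecule) is $\sum_j T_{ji} \sum_e (T\ve)_j \vd_j^\intercal \vh(e,x)$. The twirled mean then vanishes: the Schur second-moment identity $\mathbb{E}_T[T_{ji}T_{jk}] = \tfrac{1}{3}\delta_{ik}$ for Haar-random $T \in SO(3)$ collapses $\mathbb{E}_T[q(x,T)]_i$ to $\tfrac{1}{3}\sum_e e_i (\vd_x+\vd_y+\vd_z)^\intercal \vh(e,x)$, which is zero by the defining constraint of $\wep$. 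Combined with Proposition~\ref{prop:mse}, this gives $\Le(\theta) = \tfrac{1}{D}\,\mathbb{E}_{x,T}\|q(x,T)\|^2$.

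To collapse that second moment, orthogonality of $T$ gives $\|q(x,T)\|^2 = \sum_j (\vd_j^\intercal v_j(T,x))^2$ with $v_j(T,x) = \sum_e (T\ve)_j \vh(e,x)$. Applying $\mathbb{E}_T[(T\ve)_j (T\ve')_j] = \tfrac{1}{3}\,\ve \cdot \ve'$ yields
\begin{equation*}
    \Le(\theta) \;=\; \tfrac{1}{3D}\,\text{tr}\bigl(\wep^\intercal \bar{\mM}\, \wep\bigr), \qquad \bar{\mM} \;:=\; \mathbb{E}_x\bigl[\mH(x)\mH(x)^\intercal\bigr] \;\succeq\; 0,
\end{equation*}
where $\mH(x) \in \mathbb{R}^{h\times 3}$ has $j$-th column $\sum_e e_j \vh(e,x)$ (summing over the edges of all atoms in the molecule). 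A column-wise Rayleigh-quotient bound then sandwiches $\text{tr}(\wep^\intercal \bar{\mM}\, \wep) = \sum_j \vd_j^\intercal \bar{\mM} \vd_j$ between $\lambda_{\min}(\bar{\mM})\|\wep\|_F^2$ and $\lambda_{\max}(\bar{\mM})\|\wep\|_F^2$, delivering the claim with $\lambda_{\min} = \lambda_{\min}(\bar{\mM})/(3D)$ and $\lambda_{\max} = \lambda_{\max}(\bar{\mM})/(3D)$.

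The main obstacle is securing strict positivity of $\lambda_{\min}(\bar{\mM})$: this is exactly what the ``non-degenerate dataset'' hypothesis must supply, namely that the edge-weighted hidden features $\sum_e e_j \vh(e,x)$ span $\mathbb{R}^h$ when averaged over the data distribution. A degenerate distribution (e.g., hidden features confined to a proper subspace of $\mathbb{R}^h$, or edge vectors confined to a 2D subspace) leaves the upper bound intact but collapses the lower constant to zero. The Haar integrals over $SO(3)$ are standard Schur-type identities; the real care lies in the index bookkeeping and in verifying cleanly that rotation-invariance of $\vh(e,x)$ lets $T$ act only on the edge vectors when passing through the network.
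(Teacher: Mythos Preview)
Your proposal is correct and shares the paper's core strategy---split the head into an equivariant summand and a residual linear in $\wep$, observe that $\Le$ is therefore a quadratic form in $\wep$, and sandwich via Rayleigh quotients---but you push the computation further than the paper does. The paper stops at the abstract observation that $g(T,x,\wep)=T^{-1}\Delta o(Tx;\wep)$ is linear in $\text{vec}(\wep)$, writes $\Le=\vp^\intercal\bar{\mathcal{Q}}\vp$ for a $3h\times 3h$ matrix $\bar{\mathcal{Q}}=\mathbb{E}_{x,T}[(\mM_{T,x}-\bar{\mM}_x)^\intercal(\mM_{T,x}-\bar{\mM}_x)]$, and invokes non-degeneracy to assert positive definiteness. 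You instead evaluate the Haar integrals explicitly via the Schur identity $\mathbb{E}_T[T_{ji}T_{jk}]=\tfrac{1}{3}\delta_{ik}$, which buys you two things the paper does not state: (i) the twirled mean of the residual is exactly zero (so $\mu(x)=f_{\we}(x)$), and (ii) a closed form $\Le=\tfrac{1}{3D}\text{tr}(\wep^\intercal\bar{\mM}\,\wep)$ with $\bar{\mM}$ an $h\times h$ matrix, so the non-degeneracy hypothesis becomes the concrete spanning condition $\lambda_{\min}(\bar{\mM})>0$ rather than an abstract assertion about $\bar{\mathcal{Q}}$.

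One bookkeeping slip to fix: the force output lives in $\mathbb{R}^{3N}$ (one $3$-vector per atom), so $\|q\|^2=\sum_a\sum_j(\vd_j^\intercal v_j^{(a)})^2$ with a per-atom $v_j^{(a)}=\sum_{e\in E_a}(T\ve)_j\vh(e,x)$. Consequently the correct second-moment matrix is $\bar{\mM}=\mathbb{E}_x\bigl[\sum_a \mH_a(x)\mH_a(x)^\intercal\bigr]$ with $\mH_a$ built from the edges of atom $a$; your parenthetical ``summing over the edges of all atoms'' reads as a single molecule-level $\mH(x)$, which would drop the per-atom block structure. The rest of your argument is unchanged once this is corrected.
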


\textbf{Remarks.} The constants $\lambda_{\text{min}}$ and $\lambda_{\text{max}}$ depend on the model architecture, data distribution, and other parameters $\theta \setminus \mW$.


\nop{
    \subsubsection{Proof}
}
\begin{proof}
For a molecule $x$, the $k$-th component of the predicted force vector decomposes into a sum of contributions from $\we$ and $\mW_{\mathcal{E}\perp}$:

\begin{equation}
    o_k(x;\mW) = \underbrace{
        \sum_{e \in E} e_k \cdot (\bar{\vw}^T \mathbf{h(e)})
    }_{o_{eq,k}(x;\we)}
    +
    \underbrace{
        \sum_{e \in E} e_k \cdot (\vd_k^T \mathbf{h(e)})
    }_{\Delta o_k(x;\mW_{\mathcal{E}\perp})}    
\end{equation}

where the final hidden representation $\mathbf{h}$ depends on $\theta \setminus \mW$, the set of upstream parameters.
Recall the equivariance error from Proposition \ref{prop:mse}, and observe that the variance of $o_k = o_{eq} + \Delta o_k$ depends only on $\Delta o_k$, as $o_{eq}$ is equivariant by construction.
Thus, the equivariance error of the entire model, for a fixed set of upstream parameters and expressed as a function of the force prediction head parameters, is:

$$\Le(\theta) = \mathbb{E}_{x,T} \left[ \| \Delta o(Tx;\wep) - \mathbb{E}_{T'}[\Delta o(T'x;\wep)] \|^2 \right]$$

Now, let us denote:
$
    g(T, x, \wep) = T^{-1} \Delta o(Tx;\wep)
$.
Observe that this function $g$
is linear in our deviation parameters $\wep$.
By vectorizing the $h \times 3$ parameter matrix $\wep$ into a $3h \times 1$ column vector $\vp = \text{vec}(\wep)$, we can express this linear relationship as a matrix-vector product, for some matrix $\mM_{T,x}$ with shape $3 \times 3h$:
$
    g(T, x, \wep) = \mM_{T,x} \vp
$.
Similarly, the rotation-averaged prediction $\bar{g}(x;\wep) = \mathbb{E}_T[ g(T,x,\wep) ]$ is also a linear function, so we associate it with the matrix $\bar{\mM}_x$.
The equivariance error term with these linear matrix forms is:

\begin{equation}
    \mathbb{E}_{x,T}[ \| g(T,x,\wep) -
    \bar{g}(x,\wep)
    \|^2
    ]
    =
    \vp^\intercal
        \bar{\mathcal{Q}}
    \vp
\end{equation}

where the matrix $\bar{\mathcal{Q}} = \mathbb{E}_{x,T}[ (\mM_{T,x} - \bar{\mM}_x)^\intercal (\mM_{T,x} - \bar{\mM}_x) ]$. 
Finally, observe that $\bar{\mathcal{Q}}$ is positive definite, as the as equivariance error is strictly positive on a non-degenerate dataset whenever $\wep \neq \mathbf{0}$.
By the properties of a positive definite matrix, the quadratic form $\vp^\intercal \bar{\mathcal{Q}} \vp$ is lower-bounded by the smallest eigenvalue of $\bar{\mathcal{Q}}$, denoted $\lambda_{min} (\bar{\mathcal{Q}})$, which is positive. It is also upper bounded by the largest eigenvalue $\lambda_{max} (\bar{\mathcal{Q}})$. This establishes the quadratic relationship on the equivariance loss as stated in the theorem.

\end{proof}

\subsection{Proof of Proposition \ref{prop:deviation-taylor}}
\nop{
    \subsubsection{Theorem}
}
\begin{theorem*}
    For any neural network whose parameters can be expressed as $\theta = \theta_{\mathcal{E}} + \theta_{\mathcal{E}\perp}$ with $\te \in \mathcal{E}$ and $\tep \in \mathcal{E}\bot$, and for equivariance error $\Le$ defined by the variance of the output with respect to transformations, there exist positive constants $0<\lambda_{min} \leq \lambda_{max}$ such that for a non-degenerate dataset, using $\| \cdot \|$ to denote $L_2$-norm:

    \begin{equation}
    \lambda_{\text{min}} \|\tep\|^2 + \mathcal{O}(\|\tep\|^3)
    \le
    \Le(\theta)
    \le
    \lambda_{\text{max}} \|\tep\|^2 + \mathcal{O}(\|\tep\|^3)
    \end{equation}
\end{theorem*}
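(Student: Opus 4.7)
The plan is to mirror the EScAIP-specific argument of Theorem \ref{prop:deviation-escaip} but applied to the abstract Taylor expansion given immediately before the statement. First I would substitute
\[
    f(x; \te + \tep) = f(x; \te) + J_{\tep} f(x; \te)\cdot \tep + R(x, \tep), \qquad \|R(x,\tep)\| = \mathcal{O}(\|\tep\|^2),
\]
into the definition of $\Le$ as the trace-covariance (over $T \sim \mathrm{Uniform}(G)$) of the twisted prediction $T^{-1} f(Tx;\theta)$. Because $\te \in \mathcal{E}$, the zeroth-order piece $f(x;\te)$ is exactly equivariant, so its twist $T^{-1}f(Tx;\te)=f(x;\te)$ is $T$-constant and drops out of every variance. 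The leading contribution to $\Le$ therefore comes from the linear term, and the remainder enters only at higher order in $\|\tep\|$.

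Second, I would encode the linear term as a bilinear form in $\tep$. Writing $M_{T,x} \triangleq T^{-1} J_{\tep} f(Tx; \te)$ viewed as a linear operator acting on the vectorized $\tep$, the variance over $T$ of $M_{T,x}\tep$ yields a quadratic form $\tep^\top Q\, \tep$ with
\[
    Q \;=\; \mathbb{E}_{x,T}\bigl[(M_{T,x}-\bar M_x)^\top(M_{T,x}-\bar M_x)\bigr], \qquad \bar M_x = \mathbb{E}_T[M_{T,x}].
\]
Expanding $\mathrm{Var}_T$ of the sum linear-plus-remainder, the cross term between the $\mathcal{O}(\|\tep\|)$ linear piece and the $\mathcal{O}(\|\tep\|^2)$ remainder is $\mathcal{O}(\|\tep\|^3)$ (via Cauchy--Schwarz on the covariance), and the remainder's own variance is $\mathcal{O}(\|\tep\|^4)$. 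Combining gives $\Le(\theta) = \tep^\top Q\, \tep + \mathcal{O}(\|\tep\|^3)$, from which the upper bound follows immediately with $\lambda_{\max} = \lambda_{\max}(Q)$.

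The main obstacle is showing $\lambda_{\min} > 0$, i.e.\ that $Q$ restricted to $\mathcal{E}^\perp$ is strictly positive definite. By construction $Q \succeq 0$, so what must be ruled out is a nonzero $\tep \in \mathcal{E}^\perp$ with $(M_{T,x} - \bar M_x)\tep = 0$ almost surely in $(x,T)$. Such a $\tep$ would represent a first-order parameter perturbation whose directional derivative $J_{\tep}f(\cdot;\te)\tep$ is itself equivariant on the support of the data. Here I would lean on the ``non-degenerate dataset'' hypothesis in the same spirit as the ``hidden features span the feature space'' step in the EScAIP proof, together with the fact from \S\ref{appendix:parameter_space_decomposition} that $\mathcal{E}$ is the full linear subspace of admissible equivariant parameters, to conclude that any such $\tep$ must lie in $\mathcal{E}$ and therefore, being orthogonal to $\mathcal{E}$, must vanish. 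Setting $\lambda_{\min} = \lambda_{\min}(Q|_{\mathcal{E}^\perp}) > 0$ then completes the lower bound, with all cross and remainder terms absorbed into the stated $\mathcal{O}(\|\tep\|^3)$.
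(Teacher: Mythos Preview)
Your proposal is correct and follows essentially the same route as the paper: Taylor-expand $f(\cdot;\te+\tep)$ around $\te$, cancel the equivariant zeroth-order piece from the variance, collect the linear-in-$\tep$ Jacobian terms into a quadratic form $\tep^\top Q\,\tep$ with remainder $\mathcal{O}(\|\tep\|^3)$, and read off $\lambda_{\min},\lambda_{\max}$ as eigenvalues via Rayleigh--Ritz. Your treatment of the $\lambda_{\min}>0$ step is in fact more careful than the paper's (which simply asserts positive definiteness from non-degeneracy), since you explicitly argue that a null direction of $Q$ in $\mathcal{E}^\perp$ would have an equivariant first-order effect and hence must lie in $\mathcal{E}$.
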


\nop{
    \subsubsection{Proof}
}
\begin{proof} \label{proof:tep_taylor}
Applying a Taylor expansion to $\mathcal{L}_{\text{equiv}}(\theta)$ for the neural net $f$ on an input $x$ around equivariant parameters $\te$, we have:

\begin{equation}
    f(x; \te + \tep) = f(x; \te) + J_{\tep} f(x; \te ) \tep + \mathcal{O}( \| \tep \|^2)
\end{equation}

where $J_{\tep} f(x; \te )$ is the Jacobian of the network output with respect to parameter components $\tep$, evaluated at $\te$. 
As before, the term $f(x; \te)$ is equivariant by construction, and thus drops out of the equivariance error term.
The term $J_{\tep} f(x; \te ) \tep$ is linear in $\tep$, which creates a quadratic dependence on $\tep$ in the variance term in $\mathcal{L}_{\text{equiv}}$.

The deviation from the twirled mean is the difference between the canonicalized prediction and its average over transformations. Let's expand this difference:
\begin{align}
(T^{-1} \circ f \circ T)(x; \theta) - \mu(x; \theta) &= (T^{-1} \circ f \circ T)(x; \theta) - \mathbb{E}_{T'} [(T'^{-1} \circ f \circ T')(x; \theta)]
\end{align}

Substituting the Taylor series and using the equivariance of $f(x; \te)$:
\begin{align}
&= \left(f(x; \te) + [T^{-1}J{\tep}f(T(x); \te)]\tep + \mathcal{O}(|\tep|^2)\right) \nonumber \\
&\quad - \mathbb{E}_{T'} \left[f(x; \te) + [T'^{-1}J{\tep}f(T'(x); \te)]\tep + \mathcal{O}(|\tep|^2)\right] \\
&= \left( T^{-1}J_{\tep}f(T(x); \te) - \mathbb{E}_{T'}[T'^{-1}J{\tep}f(T'(x); \te)] \right)\tep + \mathcal{O}(|\tep|^2)
\end{align}
Let $\Delta J_{x,T} \triangleq T^{-1}J_{\tep}f(T(x); \te) - \mathbb{E}_{T'}[T'^{-1}J_{\tep}f(T'(x); \te)]$. The expression becomes $\Delta J_{x,T} \cdot \tep + \mathcal{O}(\|\tep\|^2)$.

\begin{align}
\mathcal{L}_{\text{equiv}}(\theta) &= \frac{1}{D} \mathbb{E}_{x,T} \left[ | \Delta J_{x,T} \cdot \tep + \mathcal{O}(|\tep|^2) |^2 \right] \\
&= \frac{1}{D} \mathbb{E}_{x,T} \left[ | \Delta J_{x,T} \cdot \tep |^2 + 2(\Delta J_{x,T} \cdot \tep)^T \mathcal{O}(|\tep|^2) + | \mathcal{O}(|\tep|^2) |^2 \right]
\end{align}

The orders of the terms are:

\begin{itemize}
    \item $\| \Delta J_{x,T} \cdot \tep \|^2$ is $\mathcal{O}(\|\tep\|^2)$.
    \item The cross-term is $\mathcal{O}(\|\tep\|) \cdot \mathcal{O}(\|\tep\|^2) = \mathcal{O}(\|\tep\|^3)$.
    \item The final term is $(\mathcal{O}(\|\tep\|^2))^2 = \mathcal{O}(\|\tep\|^4)$.
\end{itemize}

We will study the leading term, which is quadratic in $\tep$, and subsume the remainder into $\mathcal{O}(\|\tep\|^3)$.
As $\Delta J_{x,T}$ is a linear function, we can define a matrix $\bar{\mathcal{Q}}$ that represents the averaged outer product of the Jacobian deviations:
$\bar{\mathcal{Q}} \triangleq \frac{1}{D} \mathbb{E}_{x,T} \left[
    \left( \Delta J_{x,T} \right)^\intercal
    \left( \Delta J_{x,T} \right)
\right]$.
The equivariance error can now be expressed concisely:

\begin{equation}
\mathcal{L}_{\text{equiv}}(\te + \tep) \approx \tep^T \bar{\mathcal{Q}} \tep
\end{equation}

The matrix $\bar{\mathcal{Q}}$ is positive definite for a non-degenerate dataset when $\tep \neq 0$. Using the Rayleigh-Ritz theorem, this quadratic form is thus bounded by the smallest and largest eigenvalues:

$$\lambda_{\text{min}} \|\tep\|_2^2 \le \tep^T \bar{\mathcal{Q}} \tep \le \lambda_{\text{max}} \|\tep\|_2^2$$

Reincorporating the remainder term in our Taylor expression, we arrive at:

\begin{equation}
\lambda_{\text{min}} |\tep|_2^2 + \mathcal{O}(|\tep|_2^3)
\le
\mathcal{L}_{\text{equiv}}(\theta)
\le
\lambda_{\text{max}} |\tep|_2^2 + \mathcal{O}(|\tep|_2^3)
\end{equation}
    
\end{proof}

\subsection{Proof of Proposition \ref{prop:deviation-vs-gradnorm-taylor}}
\nop{
    \subsubsection{Theorem}
}
\begin{theorem*}
    Under the same conditions as the Taylor expansion theorem above, the norm of the gradient of the equivariance loss with respect to the non-equivariant parameters is bounded by the deviation itself. Specifically, there exists a constant $C$ such that:
    $$\|\nabla_{\tep} \mathcal{L}_{\text{equiv}}(\theta)\| \le C \cdot \|\tep\|$$
\end{theorem*}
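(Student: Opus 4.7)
The plan is to differentiate the Taylor expansion of $\Le(\theta)$ that was established in the proof of Theorem \ref{prop:deviation-taylor} and bound the resulting expression using the operator norm of the matrix $\bar{\mathcal{Q}}$ appearing there. Because the leading term of $\Le(\te + \tep)$ is a quadratic form in $\tep$, differentiation drops the polynomial order by exactly one, which immediately yields a gradient linear in $\|\tep\|$ up to higher-order corrections.

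First, I would import from the proof of Theorem \ref{prop:deviation-taylor} the expression
\begin{equation}
    \Le(\te + \tep) = \tep^\top \bar{\mathcal{Q}} \tep + R(\tep), \qquad R(\tep) = \mathcal{O}(\|\tep\|^3),
\end{equation}
where $\bar{\mathcal{Q}}$ is the positive semi-definite matrix formed from the averaged Jacobian-deviation outer products. Since $\te$ is treated as fixed, $\nabla_{\tep}$ acts only on the dependence through $\tep$, giving
\begin{equation}
    \nabla_{\tep} \Le(\theta) = 2 \bar{\mathcal{Q}} \tep + \nabla_{\tep} R(\tep).
\end{equation}
The first term has norm at most $2 \lambda_{\max}(\bar{\mathcal{Q}}) \|\tep\|$ by the operator-norm inequality $\|\bar{\mathcal{Q}} \tep\| \leq \lambda_{\max}(\bar{\mathcal{Q}}) \|\tep\|$.

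Next I would handle the remainder. Since $R(\tep) = \mathcal{O}(\|\tep\|^3)$ arises from analytic higher-order Taylor terms of $f$ in $\tep$, its gradient satisfies $\|\nabla_{\tep} R(\tep)\| = \mathcal{O}(\|\tep\|^2)$ on any compact neighborhood $U$ of $\te$, by differentiating the analytic expansion term by term (or equivalently by Cauchy's estimates on analytic functions, which ensure derivatives of a higher-order remainder are of correspondingly higher order). On such a neighborhood there is a finite constant $C' = \sup_{\tep \in U} \|\nabla_{\tep} R(\tep)\|/\|\tep\|^2$, so $\|\nabla_{\tep} R(\tep)\| \leq C' \|\tep\|^2 \leq C' D \|\tep\|$ where $D = \sup_{\tep \in U}\|\tep\|$. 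Combining gives
\begin{equation}
    \|\nabla_{\tep} \Le(\theta)\| \leq \bigl(2 \lambda_{\max}(\bar{\mathcal{Q}}) + C' D\bigr) \|\tep\| \;\triangleq\; C \|\tep\|,
\end{equation}
which is the claimed bound.

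The main obstacle, and the only step that is not routine, is the justification that the gradient of the $\mathcal{O}(\|\tep\|^3)$ remainder is itself $\mathcal{O}(\|\tep\|^2)$, uniformly on a compact neighborhood. This does not follow from the bound on $R(\tep)$ alone, and so I would appeal to the analyticity assumption inherited from Theorem \ref{prop:deviation-taylor}: termwise differentiation of the convergent Taylor series is valid on any compact set inside the radius of convergence, and the continuity of the Hessian of $\Le$ then yields the required uniform constant $C'$ via the extreme value theorem (as in Lemma \ref{lemma:analytic_smooth}). Once this is in hand, absorbing constants yields the single Lipschitz-type bound stated in the theorem.
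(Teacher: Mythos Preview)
Your approach is essentially the same as the paper's: differentiate the quadratic form $\tep^\top \bar{\mathcal{Q}}\,\tep$ from Theorem~\ref{prop:deviation-taylor} to obtain $2\bar{\mathcal{Q}}\,\tep$ and bound its norm by $2\lambda_{\max}(\bar{\mathcal{Q}})\|\tep\|$, which is exactly what the paper does (it sets $C = 2\lambda_{\max}$ and stops). You are in fact more careful than the paper, which works only with the approximation $\Le \approx \vp^\top \bar{\mathcal{Q}}\,\vp$ and silently drops the remainder; your treatment of $\nabla_{\tep} R(\tep) = \mathcal{O}(\|\tep\|^2)$ via analyticity/continuity of the Hessian on a compact neighborhood fills a gap the paper leaves implicit.
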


\begin{proof} \label{proof:deviation-vs-gradnorm-taylor}
    From our proof in \ref{proof:tep_taylor}, we know  $\Le(\theta) \approx \vp^\intercal \bar{\mathcal{Q}} \vp$, where $\vp = \text{vec}(\tep)$.
    The gradient of a quadratic form is linear: $\nabla_{\vp} \Le = 2\bar{\mathcal{Q}}\vp$. 
    Taking norms, we get $\|\nabla_{\vp} \Le\| = \|2\bar{\mathcal{Q}}\vp\| \le 2\|\bar{\mathcal{Q}}\|\|\vp\|$. 
    Setting $C = 2\lambda_{max}$ or $2\|\bar{\mathcal{Q}}\|_2$ gives the result.
\end{proof}

\section{Methods \& Experimental Details}
\label{section:methods}

\subsection{EScAIP}

We trained EScAIP 6M on a subset of SPICE with 950k training examples used by \cite{qu2024escaip} for 30 epochs with batch size 64.
SPICE is a dataset with of small molecule 3D conformers with energies and forces computed by quantum-mechanical density functional theory~\citep{spice}.
We varied model size from 1M, 4M and 6M, varied training set size from 950k, 50k, 5k, and 500 (with batch size 1), and varied the optimizer or learning rate.
The model predicts a 3D force vector for each atom based on density functional theory, mapping an input molecule with $N$ atoms to an output in $\mathbb{R}^{3N}$.
This task is physically equivariant to the special orthogonal group $SO(3)$ acting on atom coordinates in $\mathbb{R}^3$.

We follow the same training recipe as the original repository, which does not use data augmentation.
We suspect that data augmentation is not as important for EScAIP because it operates on rotation-invariant features.

For further details and configuration files, please refer to our code repository.

\subsection{Prote\'ina}

We trained Prote\'ina at 60M without triangular attention and 400M with triangular attention on the full Protein databank (PDB) dataset with 225k training examples. We also trained models on 1\% of the PDB with 2k examples and 0.1\% with 200 examples.
Flow matching trains a model jointly over $t$, flow matching time, ranging from $t=0$ for noise and $t=1$ for data. We measure metrics at $t=0,0.2, 0.4, 0.6, 0.8, 0.9, 0.95,$ and $0.99$, and use red colors for high $t$ close to the data, and blue-purple colors for low $t$ near noise in Figure \ref{fig:main-proteina}.
The model learns to approximate the velocity field of a probability flow that transforms random noise into structured protein backbones. For a molecule with $N$ alpha carbon atoms, the network maps noised atom coordinates and a time $t \in [0, 1]$ to a velocity vector in $\mathbb{R}^{3N}$. The learning task is made rotationally equivariant through data augmentation, aligning it with $SO(3)$ acting on atom coordinates in $\mathbb{R}^3$.

For further details and configuration files, please refer to our code repository.

\subsection{VoxMol}

Following~\cite{pinheiro2023voxmol}, we represent each molecule using a 3D voxel grid by placing a continuous Gaussian density at each atom’s position. Each atom type is assigned a distinct input channel, producing a 4D tensor of shape $[c \times l \times l \times l]$, where $c$ denotes the number of atom types and $l$ is the edge length of the voxel grid. The voxel values are normalized between 0 and 1.

The denoising task arises from the use of walk-jump sampling for generating molecules~\citep{saremi2019neural}.
This uses a two-step score-based sampling method. The ``walk'' phase involves running $k$ steps of Langevin Markov chain Monte Carlo on a randomly initialized noisy voxel grid, simulating a stochastic trajectory along a manifold. The ``jump'' phase applies a denoising autoencoder (DAE) to clean up the noisy sample using a forward pass of the trained model at step $k$. The DAE is trained on voxelized molecules corrupted with isotropic Gaussian noise, with a mean squared error (MSE) loss between prediction and ground truth. WJS provides a fast alternative to diffusion models by requiring only a single noise and denoise step~\citep{pinheiro2023voxmol, nowara2024nebula}.

\paragraph{Architecture}

The VoxMol architecture is based on a 3D U-Net with convolutional layers spanning four resolution scales, and includes self-attention modules at the two coarsest levels~\cite{pinheiro2023voxmol}.
During training, data augmentation is performed by applying random rotations and translations to each sample. For further architectural and training details, refer to~\citet{pinheiro2023voxmol}.


\paragraph{Measuring whether latent representations learn to respect equivariance}
To evaluate whether VoxMol learns equivariant latent features, we analyze cosine similarity between latent embeddings under two scenarios.

First, we examine representations of the \textit{same molecule under rotation}. Let $\mathbf{x}$ be a molecule and $R_k$ a discrete rotation operator (e.g., $90^\circ$ around an axis). Using the encoder $\phi(\cdot) \in \mathbb{R}^{C \times D \times H \times W}$, with $C = 512$ and spatial dimensions $8 \times 8 \times 8$, we define the spatially pooled latent vector:
\[
\bar{\phi}(\mathbf{x}) = \frac{1}{DHW} \sum_{d,h,w} \phi(\mathbf{x})[:, d, h, w]
\]
We then compute:
\[
\text{sim}_{\text{same}} = \cos\left( \bar{\phi}(R_k(\mathbf{x})),\; R_k(\bar{\phi}(\mathbf{x})) \right)
\]
This measures whether encoding a rotated molecule is equivalent to rotating the latent vector of the original input—a key signature of learned equivariance.

Second, to obtain a baseline, we compute cosine similarities between embeddings of \textit{randomly selected different molecules}:
\[
\text{sim}_{\text{diff}} = \cos\left( \bar{\phi}(\mathbf{x}_i),\; \bar{\phi}(\mathbf{x}_j) \right), \quad \text{with } \mathbf{x}_i \ne \mathbf{x}_j
\]

We compute these metrics across 1000 molecules for various rotation angles along all three axes. Cosine similarities are calculated over the 512-dimensional latent vectors and visualized using violin plots to capture the distributional differences in Figure \ref{fig:supplement-cosine-similarity-violin}.

\paragraph{Findings.} Cosine similarity between rotated versions of the same molecule tends to decrease as rotation angle increases, reflecting imperfect latent equivariance. While same-molecule embeddings remain more similar to each other than to embeddings of different molecules, the overlap between their distributions grows with rotation. This suggests that although the encoder partially preserves geometric structure, the latent space does not fully achieve rotation equivariance, indicating potential for improved regularization or architectural design.

\begin{figure}[H]
    \centering
    \includegraphics[width=1\linewidth]{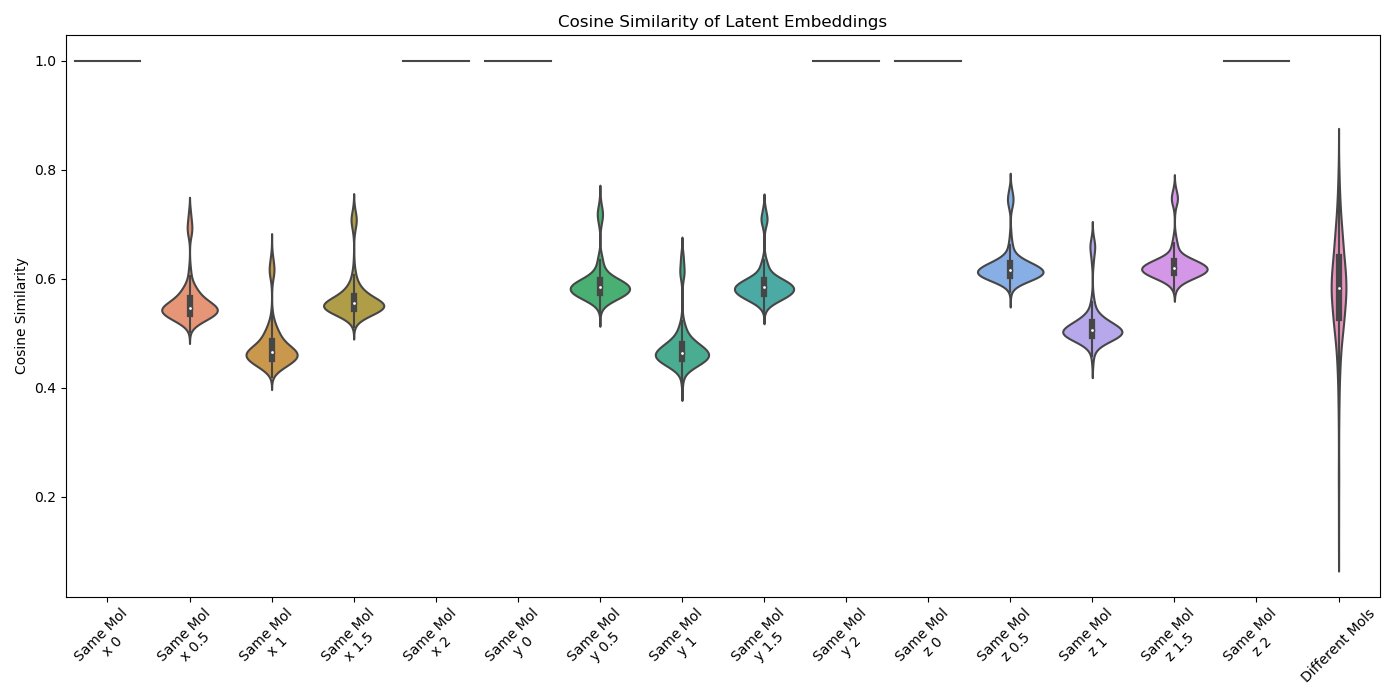}
    \caption{VoxMol: Cosine similarity of molecule latent representations with different rotations. x, y, z indicate rotation axes, and numbers 0, 0.5, 1, 1.5, 2 correspond to 0, 90, 180, 270, 360 degrees of rotation. The last column depicts cosine similarity between different molecules.}
    \label{fig:supplement-cosine-similarity-violin}
\end{figure}

\subsection{Metrics}

To compute equivariance error, twirled prediction, error, percent MSE loss from equivariance error, and gradient norms, 10 rotations per sample were used in EScAIP and Prote\'ina. 4 rotations per sample were used for VoxMol.
These numbers were found to be sufficient to provide a stable signal for metrics which was robust to randomness and resampling (\S\ref{sec:sensitivity_from_num_rotations}). Each measurement point uses a different set of random rotations, so each metric's stability over time also reflects the stability of our measurements. 
For EScAIP, these metrics were computed on the first four (fixed) validation batches with batch size of 16, for a total of 64 samples. For Prote\'ina, these metrics were computed on the first eight (fixed) validation batches with batch size of 3, for a total of 24 samples. The total MSE loss on these subsets was indicative of the total validation MSE loss, indicating these sample sizes were sufficient to provide a stable and representative signal for these metrics.

\subsection{Hessian Analysis and Condition Numbers}

To plot the loss landscape, we selected a subset of parameters in each architecture.
For EScAIP, we used the final FFN (with a non-linearity) and the final linear head, for a combined total of 33k parameters.
For Prote\'ina, we used the final linear head with 1.5k parameters.
We computed the Hessian of this parameter subset for the total MSE loss using one fixed training batch with ten rotations.
We then performed eigendecomposition of the total MSE loss Hessian to find the eigenvectors for the largest positive eigenvalue, and minimum positive eigenvalue, which formed the two axes for plotting the loss landscape.
We selected a step size approximately 2-3x the training step size at that checkpoint, which is estimated by multiplying the training learning rate with the total parameter gradient norm at that checkpoint.
We then create a 2D grid of perturbations to the parameter subset, and compute $\Lm$ and $\Le$ at each point on the grid.
Importantly, the axes and the step size are the same for both $\Lm$ and $\Le$.

To compute the condition numbers, we computed the Hessian of the same parameter subsets for $\Lm$ and $\Le$ separately, and performed eigendecomposition on them separately. We reported the condition number as the ratio between the largest positive eigenvalue and the minimum positive eigenvalue.

Here, we provide figures of the condition numbers across 20 minibatches.

\begin{figure}[H]
    \centering
    \includegraphics[width=0.5\linewidth]{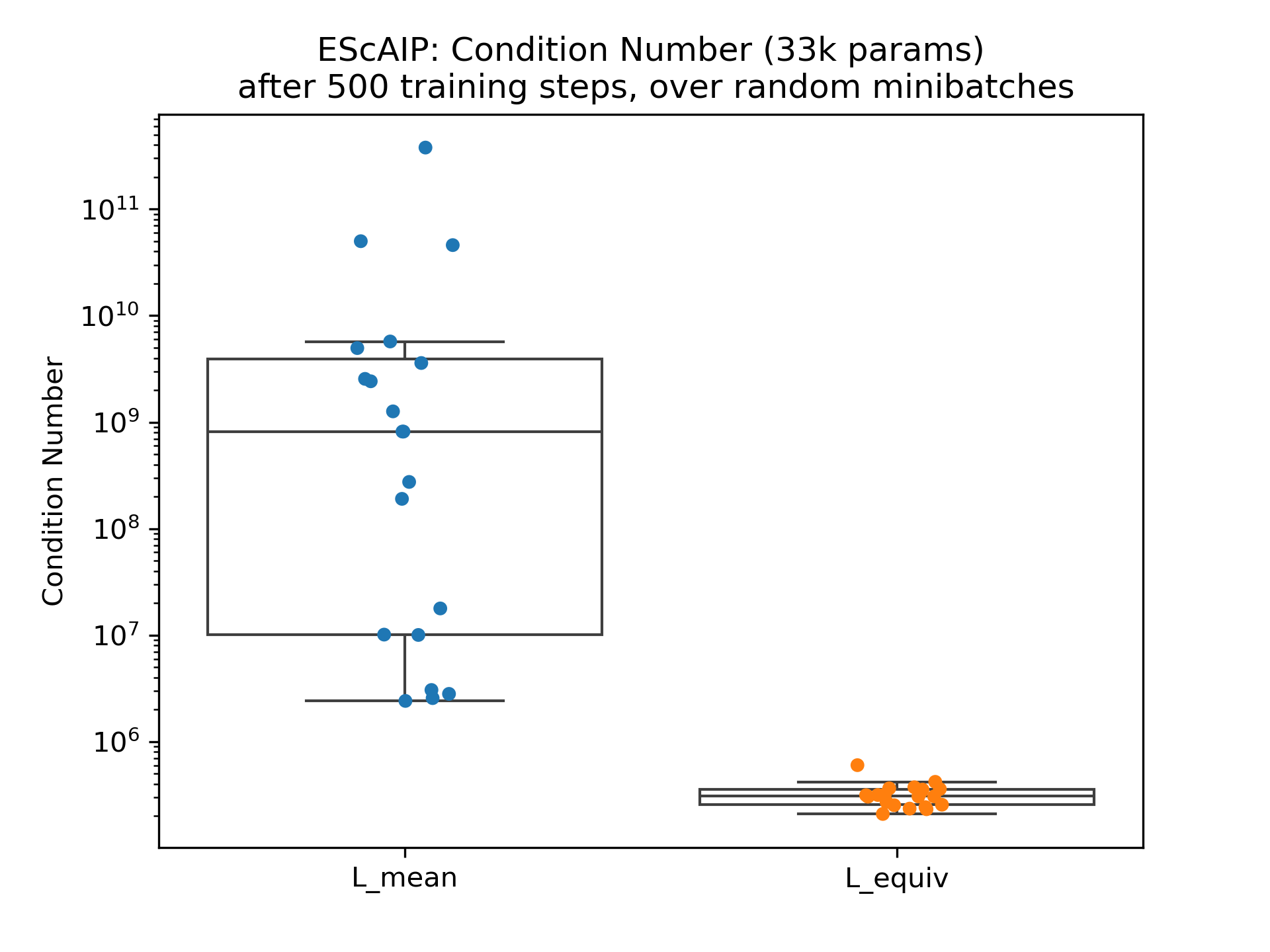}
    \caption{EScAIP: Condition numbers across 20 minibatches. }
    \label{fig:supplement-escaip-condition_numbers}
\end{figure}

\begin{figure}[H]
    \centering
    \includegraphics[width=0.5\linewidth]{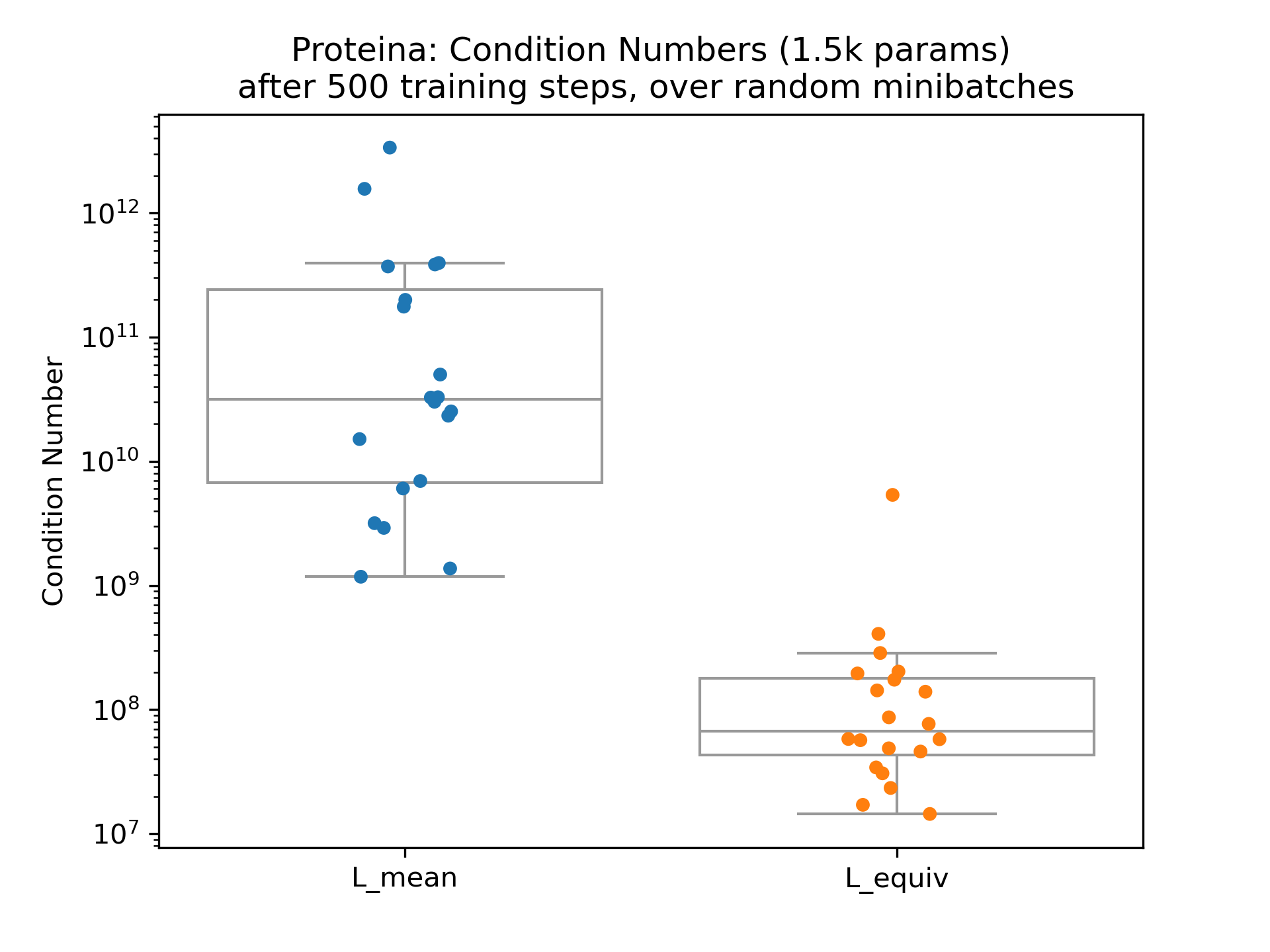}
    \caption{Prote\'ina: Condition numbers across 20 minibatches. }
    \label{fig:supplement-proteina-condition_numbers}
\end{figure}

\begin{figure}[H]
    \centering
    \includegraphics[width=0.5\linewidth]{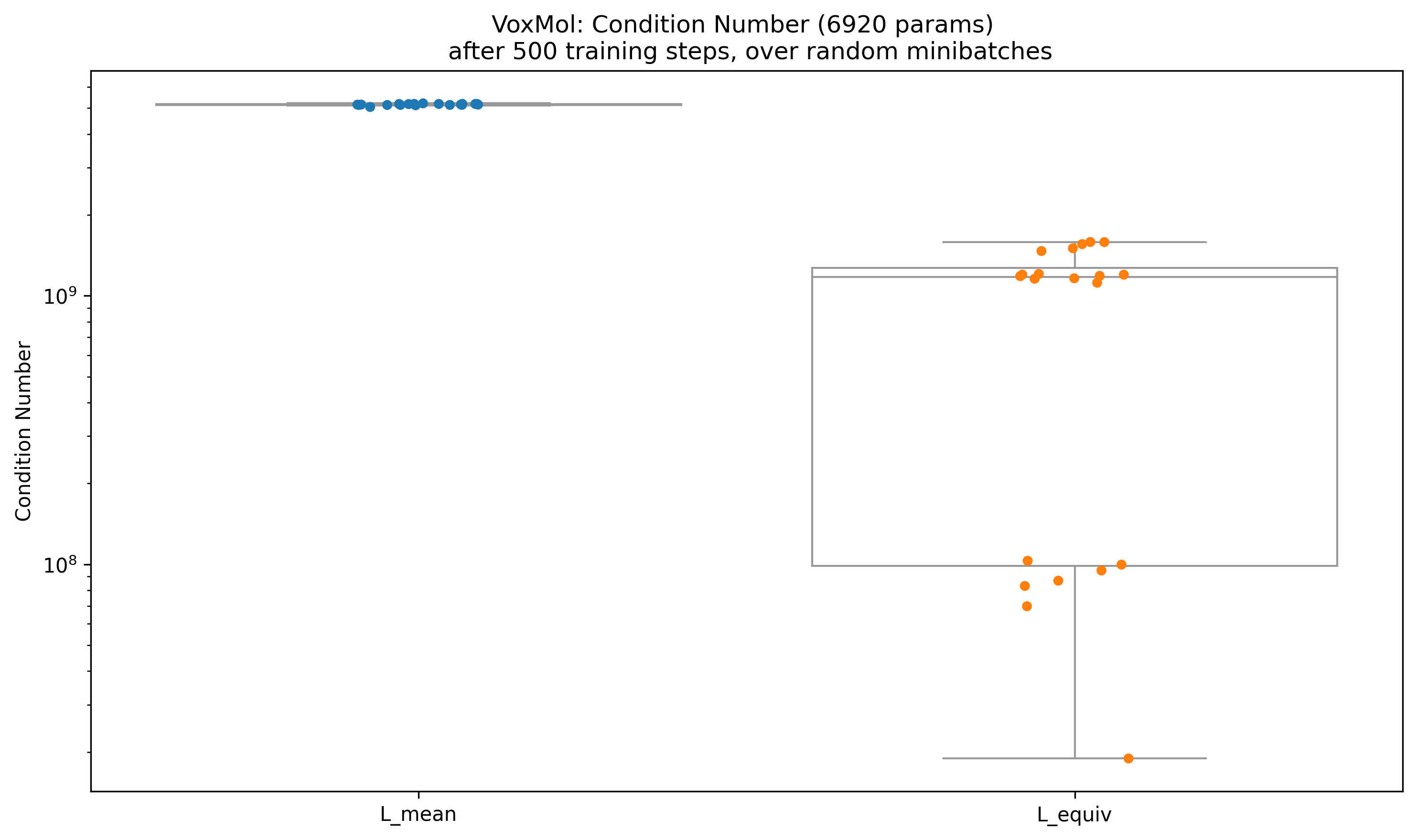}
    \caption{VoxMol: Condition numbers across 20 minibatches. }
    \label{fig:supplement-voxmol-condition_numbers}
\end{figure}

\subsection{Bias-corrected finite sample estimators}
\label{sec:unbiased_finite_sample_estimators}

In this section, we discuss unbiased finite sample estimators for $\Lm$ and $\Le$, which can be used to build a bias-corrected finite sample estimator for the percent loss from equivariance error.
Suppose we have $N>1$ finite group samples.
Denote the the exact group-averaged mean as $\mu$ (i.e., the exact expectation, which could be calculated with infinite group samples), and our finite sample estimate of the mean as $\hat{\mu}$.

\begin{align}
    \mu(x) &= \E_{T}\left[ (T^{-1} \circ f \circ T)(x) \right] \\
    \hat{\mu}(x) &= \frac{1}{N} \sum_{i=1}^N \left[ (T_i^{-1} \circ f \circ T_i)(x) \right]
\end{align}

Similarly, denote $\sigma^2$ as the exact variance over the group. Denote $\hat{\sigma}^2$ as our biased finite sample estimate of the variance (biased because it divides by $N$, not $N-1$):

\begin{align}\label{eq:_unbiased_variance}
    \hat{\sigma}^2 &= \frac{1}{N} \sum_{i=1}^N \left( (T_i^{-1} \circ f \circ T_i)(x) - \hat{\mu}(x) \right)^2 \\
    \E_T [\hat{\sigma}^2] &= \frac{N-1}{N}\sigma^2 
\end{align}

The MSE loss of our finite sample estimate $\hat{\mu}$ is biased, relative to $\mu$. For exposition, we denote $\mu = \mu(x)$.

\begin{equation}
    \E_T[ (\hat{\mu} - y)^2 ] = \underbrace{(\mu - y)^2}_{\text{True squared bias}} + \frac{\sigma^2}{N}
\end{equation}

This can be derived as follows:

\begin{align}
    (\hat{\mu} - y)^2 &= (\hat{\mu} + \mu - \mu - y)^2 \\
    &=  (\hat{\mu} - \mu)^2 + 2(\mu - y)(\hat{\mu} - \mu) + (\mu - y)^2
\end{align}

Now, take an expectation over group samples $T$:

\begin{align}
    \E_T[(\hat{\mu} - y)^2] &= (\hat{\mu} + \mu - \mu - y)^2 \\
    &=  \E_T[(\hat{\mu} - \mu)^2] + 2(\mu - y)\cancel{E_T[(\hat{\mu} - \mu)]} + \E_T[(\mu - y)^2] \\
    &= \underbrace{\E_T[(\hat{\mu} - \mu)^2]}_{= \sigma^2/N} + (\mu - y)^2
\end{align}

Thus, unbiased finite sample estimators are:

\begin{align}
    \E_T \left[ (\hat{\mu} - y)^2 - \frac{1}{N-1} \hat{\sigma}^2 \right] &= (\mu-y)^2 \\
    \E_T \left[ \frac{N}{N-1}\hat{\sigma}^2  \right] &= \sigma^2
\end{align}

\begin{proposition}
    The finite sample estimator for the percent loss from equivariance error
    
    \begin{equation}
        \frac{N}{N-1} \frac{ \hat{\sigma}^2 }{ (\hat{\mu} - y)^2 + \hat{\sigma}^2 } 
    \end{equation}
    
    has a numerator that is unbiased to $\sigma^2$, and has a denominator is unbiased to $(\mu - y)^2 + \sigma^2$.
    
\end{proposition}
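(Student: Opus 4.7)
The plan is a short, direct computation that chains together the two bias formulas already derived immediately above the proposition statement. All the real work is already done; we just need to identify the numerator and denominator of the stated estimator with the right expectations.

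First, I would handle the numerator. By the biased variance identity \ref{eq:_unbiased_variance}, $\E_T[\hat{\sigma}^2] = \frac{N-1}{N}\sigma^2$, so by linearity of expectation
\begin{equation}
\E_T\!\left[\tfrac{N}{N-1}\,\hat{\sigma}^2\right] = \sigma^2,
\end{equation}
which is exactly the claim for the numerator. This is just a restatement of the Bessel correction that was already derived.

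Next, I would turn to the denominator $(\hat{\mu}-y)^2 + \hat{\sigma}^2$. Using the squared-bias identity derived just above the proposition, $\E_T[(\hat{\mu}-y)^2] = (\mu-y)^2 + \sigma^2/N$, together with $\E_T[\hat{\sigma}^2] = \frac{N-1}{N}\sigma^2$, linearity gives
\begin{equation}
\E_T\!\left[(\hat{\mu}-y)^2 + \hat{\sigma}^2\right]
= (\mu-y)^2 + \tfrac{1}{N}\sigma^2 + \tfrac{N-1}{N}\sigma^2
= (\mu-y)^2 + \sigma^2,
\end{equation}
which is the denominator claim. The $1/N$ bias in the squared-mean term and the $(N-1)/N$ bias in the variance term cancel to give exactly $\sigma^2$, which is the key algebraic point.

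No single step is really the main obstacle, since both ingredients are already derived in the preceding paragraphs; the only subtlety worth flagging explicitly is that the full ratio estimator is \emph{not} itself unbiased for $\sigma^2/((\mu-y)^2+\sigma^2)$, because the expectation of a ratio is not the ratio of expectations. The proposition only claims unbiasedness of the numerator and denominator separately, so I would be careful to state the result in exactly those terms and not overclaim.
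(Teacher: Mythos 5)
Your proposal is correct and follows essentially the same route as the paper's own proof: apply $\E_T[\hat{\sigma}^2]=\frac{N-1}{N}\sigma^2$ for the numerator, and combine it with $\E_T[(\hat{\mu}-y)^2]=(\mu-y)^2+\sigma^2/N$ by linearity so the two bias terms sum to $\sigma^2$ in the denominator. Your closing caveat that the ratio itself is not unbiased is a sensible clarification but does not change the argument.
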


\begin{proof}

    For the numerator, see equation \ref{eq:_unbiased_variance}.
    For the denominator: $\E_T[ (\hat{\mu} - y)^2 + \hat{\sigma}^2] = (\mu - y)^2 + \sigma^2$.

    \begin{align}
        \E_T[(\hat{\mu} - y)^2] + \E_T[\hat{\sigma}^2] &= (\mu - y)^2 + \frac{1}{N}\sigma^2 + \frac{N-1}{N}\sigma^2 \\
        &= (\mu - y)^2 + \sigma^2
    \end{align}
    
\end{proof}

A similar argument can be used to motivate the $N/(N-1)$ correction for general convex losses, where the correction is second-order accurate.

\end{document}